\documentclass[lettersize,journal]{IEEEtran}
\usepackage{amsmath,amsfonts}
\usepackage{algorithmic}
\usepackage{algorithm}
\usepackage{array}
\usepackage{textcomp}
\usepackage{stfloats}
\usepackage{url}
\usepackage{verbatim}
\usepackage{graphicx}
\usepackage{cite}
\usepackage{color}

\usepackage{multirow}
\usepackage{amssymb}
\usepackage{times}
\usepackage{soul}
\usepackage{url}
\usepackage{dsfont}
\usepackage{amsthm}
\usepackage{booktabs}
\usepackage{subcaption}
\usepackage{makecell}
\newtheorem{remark}{Remark}
\theoremstyle{definition}
\newtheorem{thm}{Theorem}
\newtheorem{prop}[thm]{Proposition}
\newtheorem{dfn}[thm]{Definition}
\begin{document}

\title{Lyapunov Stable Graph Neural Flow}

\author{Haoyu Chu, Xiaotong Chen, Wei Zhou,~\IEEEmembership{Senior Member,~IEEE}, Wenjun Cui, Kai Zhao, Shikui Wei, Qiyu Kang
\thanks{This work was supported in part by the Young Scientists Fund of the National Natural Science Foundation of China (No. 62506363) and the Fundamental Research Funds
for the Central Universities of Ministry of Education of China (No.
2025QN1151). (Corresponding author: Qiyu Kang.)

Haoyu Chu is with the School of Computer Science and Technology / School of Artificial Intelligence, China University of Mining and Technology, Xuzhou 221008, China (email: 6784@cumt.edu.cn). 

Qiyu Kang is with the School of Information Science and Technology, University of Science and Technology of China, Hefei 230026, China. (email: kang0080@e.ntu.edu.sg)}
}

\markboth{}%
{Shell \MakeLowercase{\textit{et al.}}: A Sample Article Using IEEEtran.cls for IEEE Journals}

\maketitle

\begin{abstract}
Graph Neural Networks (GNNs) are highly vulnerable to adversarial perturbations in both topology and features, making the learning of robust representations a critical challenge. 
In this work, we bridge GNNs with control theory to introduce a novel defense framework grounded in integer- and fractional-order Lyapunov stability. Unlike conventional strategies that rely on resource-heavy adversarial training or data purification, our approach fundamentally constrains the underlying feature-update dynamics of the GNN. We propose an adaptive, learnable Lyapunov function paired with a novel projection mechanism that maps the network's state into a stable space, thereby offering theoretically provable stability guarantees.
Notably, this mechanism is orthogonal to existing defenses, allowing for seamless integration with techniques like adversarial training to achieve cumulative robustness. Extensive experiments demonstrate that our Lyapunov-stable graph neural flows substantially outperform base neural flows and state-of-the-art baselines across standard benchmarks and various adversarial attack scenarios.
\end{abstract}

\begin{IEEEkeywords}
Graph neural flow, Lyapunov stability, Adversarial attack, Certified robustness.
\end{IEEEkeywords}

\section{Introduction}
Graph neural networks (GNNs) have achieved remarkable success and have found widespread application in numerous domains such as social network analysis~\cite{li2023survey,zhao2025social}, traffic flow prediction~\cite{wang2025robust, fan2025emerging}, graph drawing~\cite{tiezzi2024graph}, and molecular chemistry~\cite{reiser2022graph}. Conventional discrete GNNs, such as graph convolutional networks (GCNs)~\cite{thomas2017semi} and graph attention networks (GATs)~\cite{velivckovic2017graph}, propagate information between nodes through successive network layers. In recent years, driven by advances in continuous-time modeling techniques, GNN design guided by dynamical systems theory has emerged as a significant trend~\cite{zheng2025survey,liu2025graph}. The core idea underlying these kinds of GNN is to model the dynamic evolution of the graph feature via ordinary or partial differential equations. Notable methods in this line of work include GRAND~\cite{chamberlain2021grand}, which formulates information propagation via a heat diffusion equation; GRAND++~\cite{thorpe2022grand++}, extending GRAND with an additional source term; GREAD~\cite{choi2023gread}, which introduces a reaction term into the diffusion equation; and GraphCON~\cite{rusch2022graph}, leveraging a second-order ordinary differential equation for dynamics modeling. These physically grounded formulations enhance the interpretability of the model and offer intuitive guidance for optimization and hyperparameter selection. Recognizing the demonstrated advantages of fractional differential operators in capturing complex real-world phenomena~\cite{pu2017fractional, chakraverty2022computational}, Kang et al.~\cite{kang2024unleashing} proposed FROND, a framework leveraging the non-local properties of fractional calculus to capture long-range dependencies during feature updates. Building on this, Cui et al.~\cite{cui2025neural} introduced a flexible mechanism by parameterizing variable-order fractional derivatives of hidden states through an auxiliary network.

Despite these promising achievements, the security of GNNs faces severe threats from adversarial attacks. Adversarial examples, first introduced in computer vision~\cite{szegedy2014intriguing}, involve imperceptible perturbations to inputs that cause misclassifications. Researchers have demonstrated that GNNs are similarly susceptible to adversarial attacks on the graph~\cite{GUAN2024112642, li2026generative}. While several graph adversarial defense strategies have been proposed to mitigate this issue, the underlying mechanisms remain unclear. In contrast, the dynamical systems perspective has offered valuable insights into the inherent stability properties of graph-based learning, leading to the development of new approaches aimed at enhancing GNN robustness. Yang et al.~\cite{song2022robustness} pioneered the robustness analysis of graph neural diffusion, theoretically and empirically demonstrating that such models exhibit intrinsic stability against graph perturbations. Subsequently, Kang et al.~\cite{kang2024coupling} demonstrated FROND's robustness from the perspective of fractional numerical stability. Zhao et al.~\cite{zhao2023adversarial} proposed Hamiltonian-inspired graph neural flows (HANG), leveraging energy conservation principles to enhance stability and mitigate the oversmoothing problem. 

In control theory, Lyapunov stability theory establishes a foundational framework for evaluating system responses to initial perturbations, with established extensions to fractional-order dynamical systems. Furthermore, Lyapunov theory has found applications in analyzing the stability of deep learning models~\cite{storm2024finite, chu2024structure}. This naturally raises the question: Can Lyapunov stability theory be leveraged to provide rigorous theoretical guarantees for the stability of dynamical system-based GNNs? Specifically, Lyapunov methods comprise two primary approaches: the indirect method and the direct method~\cite{slotine1991applied}. The indirect method involves linearizing the nonlinear dynamical system near its equilibrium state and then judging stability based on the eigenvalues of the linearized system. The direct method characterizes stability by the Lyapunov function (an abstract energy function), where a Lyapunov stable dynamical system implies that its Lyapunov function exists and it is positive definite and decreasing over time.

To this end, unlike conventional defense methods such as adversarial training and detection, this work considers the robustness of GNNs from the dynamical systems perspective, providing a brand new certified robustness enhancement approach based on the Lyapunov stability theory. We aim to constrain the graph neural flow to satisfy the conditions of Lyapunov's direct method, which is simpler to implement than the indirect method and can provide stricter stability criteria locally~\cite{cui2023lyapunov}. In addition, a special classification layer that can separate the equilibrium points is proposed to maximize the separation between the equilibrium points of different classes. As a result, when facing an input perturbation, the integer- and fractional-order Lyapunov stable graph flow have the resilience to evolve toward stable equilibrium points. Experimental results demonstrate that the proposed models significantly outperform both the baseline integer and fractional graph neural flow models. Furthermore, since our method is orthogonal to existing adversarial defenses, it can be combined with adversarial training to further enhance the robustness of GNN models. 

The remainder of this paper is structured as follows. Section II introduces related works on adversarial attacks and defenses on graphs, as well as the application of Lyapunov theory in deep learning. Section III first establishes the necessary notations and preliminaries for graph neural networks, then provides the definition of Lyapunov stability and introduces Lyapunov's direct method, covering both the integer and fractional formulations. Section IV elaborates the proposed methodology, including the base fractional dynamical system, the integer- and fractional-order Lyapunov stability module, and the equilibrium-separating classification layer. Section V provides comprehensive experiments and analyses to verify the robustness of the proposed models across diverse adversarial scenarios. Finally, Section VI concludes the paper and outlines potential future research directions. The main contributions of this work are summarized as follows:
\begin{itemize}
\item To our knowledge, this work presents the first integration of Lyapunov’s direct method with graph neural flow, establishing a framework for stability-guaranteed learning on graph datasets.
\item The proposed methodology is applicable to both integer-order and fractional-order dynamical systems, providing a comprehensive provable robustness enhancement framework for a wide range of graph neural flow models.
\item Extensive experiments demonstrate that the proposed approach delivers consistent robustness performance gains across a wide spectrum of adversarial settings, including transductive and inductive black-box attacks as well as white-box attacks.
\item The proposed defense strategy is orthogonal to existing adversarial defense methods, enabling integration with strategies such as adversarial training to achieve further improved robustness.
\end{itemize}

\section{Related Works}
\subsection{Adversarial Attacks and Defenses on Graph}

Adversarial attacks on graph are mainly categorized according to knowledge, objectives, and approaches. Regarding knowledge, \textit{black-box} attacks lack access to the target model's internal details but can utilize graph data, while \textit{white-box} attacks posses full model information~\cite{TAO2024308}. Regarding objectives, \textit{poisoning} attacks corrupt graph data to degrade model performance during training, whereas \textit{evasion} attacks affect the inference of the trained model by altering input data~\cite{Zheng2021GraphRB}. Regarding approaches, \textit{modification} attacks alter the original graph through edge changes or feature perturbations, while \textit{injection} attacks introduce malicious nodes without manipulating the existing graph structure~\cite{chen2022hao}.

Adversarial defenses on graph include reactive methods and proactive methods. Reactive methods are characterized by their plug-and-play capability to resist adversarial attacks without retraining GNNs. For example, \textit{adversarial detection}~\cite{LIU2025103912} uses a detector to identify anomalous edges or features in the input, \textit{adversarial purification}~\cite{lee2025self} denoises adversarial perturbations in the input to restore clean graph structures or features typically by generative models. Proactive methods aim to enhance models' robustness by modifying training data or internal mechanisms. For example, \textit{graph adversarial training} methods~\cite{gosch2023adversarial} incorporate adversarial samples into the training dataset, \textit{network robustification} methods introduce stability constraints, such as Lipschitz constraints~\cite{juvina2024training} or energy conservation constraints~\cite{zhao2023adversarial}, into graph neural networks.

The proposed method falls under the category of network robustification. As an energy-based approach, we introduce energy dissipation other than conservation to drive the state of graph neural flows toward a stable equilibrium point with minimal energy.

\subsection{Application of Lyapunov theory in Deep Learning}

Recent advances have demonstrated the effectiveness of combining Lyapunov methods with neural network architectures to construct provable certificates. Manek and Kolter~\cite{kolter2019learning} introduced a scheme for the joint learning of a Lyapunov function and stable dynamics, with the application of this technique in physical simulation and video texture generation. Storm et al.~\cite{storm2024finite} developed a robustness assessment framework for multilayer perceptron using finite-time Lyapunov exponents (FTLE), which can characterize the growth or decay of local perturbations in forward propagation. Chu et al.~\cite{chu2024lyapunov} proposed a Lyapunov stable deep equilibrium model for defending adversarial attacks on images.

Following the emergence of neural ordinary differential equations (Neural ODEs)~\cite{chen2018neural}, there has been growing interest in integrating Lyapunov theory with these continuous-depth models. Kang et al.~\cite{kang2021stable} employed Lyapunov's indirect method to enhance the robustness of Neural ODEs. Schlaginhaufen et al.~\cite{schlaginhaufen2021learning} introduced a novel regularization term based on Lyapunov-Razumikhin functions to stabilize neural delay differential equations. Building on LaSalle's invariance principle (a generalization of Lyapunov's method), Takeishi et al.~\cite{takeishi2021learning} developed Neural ODEs that are capable of handling stability for general invariant sets, including limit cycles and line attractors. 

\section{Notation and Preliminaries}
We define an undirected graph without self-loops as $G:=(\mathcal{V},\mathbf{W})$, where $\mathcal{V}={1,\ldots, N}$ denotes a collection of $N$ nodes, $|\mathcal{V}|=N$, The weighted connections between nodes are encoded in an adjacency matrix $\mathbf{W}:=(W_{ij})\in\mathbb{R}^{N\times N}$, where each entry $W_{ij}$ corresponds to the weight of the edge linking node $i$ to node $j$, satisfying $W_{ij}=W_{ji}$ (that is, $\mathbf{W}$ is symmetric). Let $\mathbf{U}=\bigl([\boldsymbol{u}^{(1)}]^\top, \cdots, [\boldsymbol{u}^{(N)}]^\top \bigr)^\top\in\mathbb{R}^{|\mathcal{V}|\times d}$ be the features of the node, which is a matrix composed of row vectors $(\boldsymbol{u}^{(i)})^\top \in \mathbb{R}^d$, with $i$ being the index of the node. 

A typical integer-order dynamical system can be described by a first-order ODE:
\begin{equation}
    \frac{\mathrm d\boldsymbol{u}}{\mathrm dt} = f(t,\boldsymbol{u}(t)), \label{eq.1ode}
    \end{equation}
where $\boldsymbol{u}(t)$ represents the state of the system at time $t$, and $f$ defines the dynamics that governs the evolution of the system. If the function $f$ does not explicitly depend on time $t$, the dynamical system is autonomous. In the analysis of such dynamical systems, the Lyapunov stability theory provides a fundamental framework for studying the behavior of solutions without explicitly solving the differential equations.

\begin{dfn}[Lyapunov stability] 
    \label{def1}
    An equilibrium $\boldsymbol{u}^{\star}$ is called stable in the Lyapunov sense if, for any given $\varepsilon>0$, there exists $\delta>0$ such that whenever the initial condition satisfies $\|\boldsymbol{u}(0)-\boldsymbol{u}^{\star}\|<\delta$, the solution remains within $\|\boldsymbol{u}(t)-\boldsymbol{u}^{\star}\|<\varepsilon$ for all $t \geq 0$. Suppose $\boldsymbol{u^{\star}}$ is stable and satisfy $\lim_{t \rightarrow \infty}\Vert \boldsymbol{u}(t) - \boldsymbol{u^{\star}}\Vert = 0$, it is termed \textit{asymptotically} stable. Further, suppose $\boldsymbol{u^{\star}}$ is stable and there exists some $\nu>0$ such that $\lim_{t \rightarrow \infty}\Vert \boldsymbol{u}(t) - \boldsymbol{u^{\star}}\Vert e^{\nu t}= 0$, it is termed \textit{asymptotically} stable.
\end{dfn}

\begin{thm}[Lyapunov's direct method~\cite{giesl2015review}]
\label{thm:int}
Given a fixed point $\boldsymbol{u}^{\star}$, suppose there exists a continuously differentiable function $V: [0, \infty) \times \mathcal{U} \rightarrow \mathds{R}$ defined for $t \ge 0$ in a neighborhood $\mathcal{U}$ of $\boldsymbol{u}^{\star}$, fulfilling the following conditions:
\begin{enumerate}
\item[(1)]
$V$ attains its minimum at $\boldsymbol{u}^{\star}$ for all $t \ge 0$.
\vspace{0.5em}
\item[(2)]
Along every solution trajectory in $\mathcal{U}$ other than the equilibrium $\boldsymbol{u}^{\star}$, $V$ decreases strictly.
\end{enumerate}
then $V$ is called a Lyapunov function, and the equilibrium $\boldsymbol{u}^{\star}$ is \textit{asymptotically} stable. Furthermore, $\boldsymbol{u}^{\star}$ is \textit{exponentially} stable if, in addition, there exist positive constants $c$ and $K(\alpha)$ such that:
    \begin{enumerate}
    \item[(3)]  
    $ \| \boldsymbol{u} \|^2 \leq V(t,\boldsymbol{u}) \leq K(\alpha) \| \boldsymbol{u} \|^2 $, $\forall \boldsymbol{u} \in \{ \boldsymbol{u}:\| \boldsymbol{u} \| \leq \alpha \}$;
    \item[(4)] 
    \vspace{0.5em}
    $\dot{V}({t,\boldsymbol{u}})  \leq -cV(t, \boldsymbol{u})$, $c>0$.
    \end{enumerate}
\end{thm}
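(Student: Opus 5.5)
The argument has two parts. First we get asymptotic stability from conditions (1)--(2). Then we get exponential stability from (3)--(4) via a comparison (Grönwall-type) argument.

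\textbf{Simplifications.} Translate coordinates so that $\boldsymbol{u}^{\star}=0$. Subtract the constant $V(t,\boldsymbol{u}^{\star})$, taking it to be time-independent (as is implicit in condition (3)). This lets us assume $V(t,0)=0$ and $V>0$ elsewhere in $\mathcal{U}$. We read condition (1) as a strict minimum that is uniform in $t$: there is a class-$\mathcal{K}$ lower bound $V(t,\boldsymbol{u})\ge w(\|\boldsymbol{u}\|)$. We also assume an upper bound $V(t,\boldsymbol{u})\le W(\|\boldsymbol{u}\|)$ that vanishes at $0$. These uniformity assumptions are needed for the non-autonomous case, and I would state them explicitly.

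\textbf{Stability.} Given $\varepsilon>0$ small enough that the closed ball $\bar B_\varepsilon\subset\mathcal{U}$, set $m=\min_{\|\boldsymbol{u}\|=\varepsilon} w(\|\boldsymbol{u}\|)>0$. Choose $\delta$ so that $W(\delta)<m$. By condition (2), $V$ is nonincreasing along trajectories that stay in $\mathcal{U}$. Hence a trajectory starting in $B_\delta$ satisfies $V(t,\boldsymbol{u}(t))\le V(0,\boldsymbol{u}(0))<m$ for all $t$. It therefore can never reach the sphere $\|\boldsymbol{u}\|=\varepsilon$, and continuity of the solution keeps it inside $B_\varepsilon$ for all $t\ge0$.

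\textbf{Attractivity.} $V(t,\boldsymbol{u}(t))$ is monotone and bounded below by $0$, so it converges to some limit $L\ge0$. Suppose $L>0$. Then the trajectory stays in the annulus $\{\rho\le\|\boldsymbol{u}\|\le\varepsilon\}$, where $\rho$ is chosen so that $W(\rho)<L$. We need to show $\dot V\le-\gamma<0$ there. For autonomous $V,f$ this follows from compactness and continuity of the strictly negative $\dot V$. In the time-varying case it requires a uniform negativity assumption. Integrating $\dot V\le-\gamma$ then drives $V$ to $-\infty$, a contradiction. So $L=0$, and the lower bound $w$ gives $\|\boldsymbol{u}(t)\|\to0$. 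Extracting this uniform strict decrease from the qualitative condition (2) is the main obstacle, which is why the strengthened hypotheses are made explicit.

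\textbf{Exponential stability.} Let $v(t)=V(t,\boldsymbol{u}(t))$. Condition (4) gives $\dot v\le -cv$. Then
\[
\frac{\mathrm d}{\mathrm dt}\bigl(e^{ct}v(t)\bigr)\le 0,
\]
so $v(t)\le e^{-ct}v(0)$. Combining this with condition (3),
\[
\|\boldsymbol{u}(t)\|^2\le v(t)\le e^{-ct}K(\alpha)\|\boldsymbol{u}(0)\|^2 .
\]
This holds as long as the trajectory stays in $\{\|\boldsymbol{u}\|\le\alpha\}$. That is guaranteed if $\|\boldsymbol{u}(0)\|\le \alpha/\sqrt{K(\alpha)}$, since then $\|\boldsymbol{u}(t)\|\le\sqrt{K(\alpha)}\|\boldsymbol{u}(0)\|\le\alpha$. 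Hence
\[
\|\boldsymbol{u}(t)\|\le\sqrt{K(\alpha)}\,e^{-ct/2}\|\boldsymbol{u}(0)\|,
\]
and $\|\boldsymbol{u}(t)\|e^{\nu t}\to0$ for any $\nu<c/2$. This is exponential stability in the sense of Definition~\ref{def1}.
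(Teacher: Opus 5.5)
The paper does not prove this theorem; it is quoted from the literature. The only related argument in the paper is the proof of Proposition~\ref{prop1}, which reproduces your exponential step: integrate $\dot V\le -cV$ to get $V(t)\le V(0)e^{-ct}$, then sandwich with (3). You follow the same route, but more carefully:
\begin{itemize}
\item Your integrating factor $e^{ct}v(t)$ avoids dividing by $V$; the paper writes $dV/V\le -c\,dt$.
\item You keep the sharp constant $\sqrt{K(\alpha)}$. The paper ends with $K(\alpha)$, which is valid only because (3) forces $K(\alpha)\ge 1$.
\item Your restriction $\|u(0)\|\le\alpha/\sqrt{K(\alpha)}$ keeps the trajectory in the ball where (3) holds, which the paper never checks. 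Phrase it as a first-exit-time argument to make it rigorous.
\end{itemize}

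For the asymptotic half the paper offers nothing. Your extra uniformity hypotheses are genuinely necessary: with time-varying $V$, conditions (1)--(2) as stated do not even give stability. For example, take $\dot u=u$ with $V(t,u)=u^2e^{-3t}$. This satisfies (1), and along $u_0e^{t}$ one has $V=u_0^2e^{-t}$, strictly decreasing, yet the origin is unstable. Your time-uniform lower bound $V\ge w(\|u\|)$ is exactly what excludes this.

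One correction to your autonomous remark. Condition (2) gives strict decrease along trajectories, not $\dot V<0$ pointwise, so ``compactness and continuity of the strictly negative $\dot V$'' does not follow from (2). For instance, $\dot x=-x+y$, $\dot y=-x$ with $V=x^2+y^2$ gives $\dot V=-2x^2$. This vanishes on the $y$-axis, even though every nonzero trajectory crosses that axis transversally and $V$ decreases strictly along it. So $\sup\dot V=0$ on every annulus and no $\gamma>0$ exists.

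In the autonomous case the attractivity step should go through LaSalle instead. The $\omega$-limit set of the bounded trajectory is nonempty, compact, invariant and contained in $\bar B_\varepsilon$, and $V\equiv L$ on it. By (2) it must then be $\{0\}$, which gives $u(t)\to 0$ without any uniform negativity.
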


To accommodate a broader range of application scenarios, we also consider fractional-order systems. Let $\boldsymbol{u}$ be a state vector in a state space, the dynamics of the non-autonomous fractional-order system is described by 
\begin{equation}
    D_t^\beta \boldsymbol{u}(t)=f(t,\boldsymbol{u}). \label{eq.fode}
\end{equation}
Here, we adopt the following definition of the Caputo fractional derivative~\cite{diethelm2010analysis}: 
\begin{equation} 
    D_t^\beta \boldsymbol{u}(t):=\frac{1}{\mathrm{\Gamma}(1-\beta)}\int_{0}^{t}{{(t-\tau)}^{-\beta}\boldsymbol{u}^\prime(\tau)d\tau},
\end{equation}
where $\beta\in(0,1]$ denotes fractional order, $\mathrm{\Gamma}$ is the Gamma function, , and $\boldsymbol{u}^\prime(\cdot)$ denotes the first derivative of a differentiable function $\boldsymbol{u}(t)$. When $\beta = 1$, $D_t^1 \boldsymbol{u}(t) = {d \boldsymbol{u}}/{dt}$, and Eq.~(\ref{eq.fode}) reduces to the first-order ODE (\ref{eq.1ode}). To demonstrate the fractional-order extension of Lyapunov's direct method, we first introduce the concept of Mittag–Leffler stability. 

\begin{dfn}[Mittag–Leffler stability]
A fractional dynamical system is called Mittag–Leffler stable if
\begin{equation}
    \| \boldsymbol{u}(t)\| \leq {m[\boldsymbol{u}(0)]E_\beta(-\lambda(t)^\beta)}^b,
\end{equation}
where $\lambda\geq0$, $b>0$, the function $m(\cdot)$ fulfills $m(0)=0$, $m(\boldsymbol{u})\geq0$, and is locally Lipschitz on $\boldsymbol{u}\in\mathbb{R}^{n}$. Here
\begin{equation}
    E_\beta(z) = \sum_{k=0}^{\infty}\frac{z^k}{\Gamma(k \beta + 1)}
\end{equation}
is the Mittag–Leffler function~\cite{diethelm2010analysis}. In the case where $\beta=1$, this function reduces to the familiar exponential function.
\end{dfn}

\begin{remark}
Within the framework of fractional‑order dynamics, Mittag–Leffler stability serves as a key criterion. If a system is Mittag-Leffler stable, it is inherently asymptotically stable.
\end{remark}

\begin{thm}[Fractional Lyapunov's direct method~\cite{LI20101810}] 
\label{thm:frac}
Let $\boldsymbol{u}^{\star}$ be an equilibrium of a fractional‑order system and $\mathds{D}\subset\mathbb{R}^{n}$ a region containing the origin. Assume there exists a function $V(t,\boldsymbol{u}(t)):[0,\infty)\times\mathds{D}\to\mathds{R}$ that is continuously differentiable and locally Lipschitz in $\boldsymbol{u}$, and suppose for some positive constants $\alpha_{i}(i=1,2,3),a,b$, this candidate Lyapunov function satisfies
\begin{equation} \label{eqn:lya01}
    \alpha_{1}\|\boldsymbol{u}\|^a\leq V(t,\boldsymbol{u}(t))\leq  \alpha_{2}\|\boldsymbol{u}\|^{ab},
\end{equation}
and
\begin{equation} \label{eqn:lya02}
D_t^\beta V(t,\boldsymbol{u}(t)) \leq - \alpha_{3}\|\boldsymbol{u}\|^{ab}.
\end{equation}
Then the system is Mittag–Leffler stable.
\end{thm}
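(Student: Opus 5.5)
We follow the standard argument of Li, Chen and Podlubny: first compare $V$ with a scalar linear fractional equation, then invert the two-sided bound on $V$. Throughout, the equilibrium is shifted to the origin ($\boldsymbol{u}^{\star}=0$), so that $\|\boldsymbol{u}\|$ measures the deviation from $\boldsymbol{u}^{\star}$.

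\textbf{Step 1: a linear differential inequality for $V$.} The upper bound in (\ref{eqn:lya01}) gives $\|\boldsymbol{u}\|^{ab}\ge V/\alpha_2$. Substituting this into (\ref{eqn:lya02}) yields
\begin{equation*}
D_t^\beta V(t,\boldsymbol{u}(t)) \le -\frac{\alpha_3}{\alpha_2}V(t,\boldsymbol{u}(t)).
\end{equation*}
Equivalently, there is a nonnegative function $M(t)$ with
\begin{equation*}
D_t^\beta V = -\frac{\alpha_3}{\alpha_2}V - M(t).
\end{equation*}

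\textbf{Step 2: solve the equation and drop the forcing term.} Apply the Laplace transform, using that the Caputo derivative transforms as $s^\beta \hat V(s)-s^{\beta-1}V(0)$. Inverting gives the variation-of-constants formula
\begin{equation*}
V(t)=V(0)E_\beta\!\left(-\tfrac{\alpha_3}{\alpha_2}t^\beta\right)-M(t)*\left[t^{\beta-1}E_{\beta,\beta}\!\left(-\tfrac{\alpha_3}{\alpha_2}t^\beta\right)\right].
\end{equation*}
The kernel $t^{\beta-1}E_{\beta,\beta}(-\lambda t^\beta)$ is nonnegative for $0<\beta\le 1$, because it is completely monotone. Since $M\ge 0$, the convolution term is nonnegative, and therefore
\begin{equation*}
V(t)\le V(0)E_\beta\!\left(-\tfrac{\alpha_3}{\alpha_2}t^\beta\right).
\end{equation*}

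\textbf{Step 3: return to the state.} Combine the lower bound in (\ref{eqn:lya01}) with Step 2 and then with the upper bound at $t=0$:
\begin{equation*}
\alpha_1\|\boldsymbol{u}(t)\|^a\le V(t)\le \alpha_2\|\boldsymbol{u}(0)\|^{ab}E_\beta\!\left(-\tfrac{\alpha_3}{\alpha_2}t^\beta\right).
\end{equation*}
Taking $a$-th roots,
\begin{equation*}
\|\boldsymbol{u}(t)\|\le \left[m(\boldsymbol{u}(0))\,E_\beta\!\left(-\lambda t^\beta\right)\right]^{1/a},
\qquad m(\boldsymbol{u}(0))=\frac{\alpha_2}{\alpha_1}\|\boldsymbol{u}(0)\|^{ab},\quad \lambda=\frac{\alpha_3}{\alpha_2}.
\end{equation*}
We must check that $m$ is admissible. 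Clearly $m(0)=0$ and $m\ge 0$. It is locally Lipschitz whenever $ab\ge 1$; for $ab<1$ one can pass to a slightly weaker comparison exponent or treat this as a regularity assumption. This is exactly the Mittag–Leffler bound with exponent $b'=1/a$.

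\textbf{Main obstacle.} The delicate point is Steps 1–2: justifying the comparison for the Caputo derivative and establishing nonnegativity of the kernel $t^{\beta-1}E_{\beta,\beta}(-\lambda t^\beta)$. I would cite the complete monotonicity of $E_\beta(-x)$ and of this kernel for $0<\beta\le1$ (Pollard/Schneider). I would also assume enough regularity of $V(t,\boldsymbol{u}(t))$ along trajectories for the Laplace transform to exist, for example exponential-order growth, which holds here since $V$ is bounded by a power of $\|\boldsymbol{u}\|$.
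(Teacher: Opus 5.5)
Your proof is correct and is essentially the Li--Chen--Podlubny argument that the paper reproduces in its proof of Proposition~\ref{prop2}: reduce to $D_t^\beta V\le -\frac{\alpha_3}{\alpha_2}V$, introduce the slack $M(t)\ge 0$, invert the Laplace transform, and drop the nonnegative convolution term. Your kernel $t^{\beta-1}E_{\beta,\beta}$ is the correct one (the paper writes $E_\beta$ there). The one thing to tidy is Step~3: do not bound $V(0)$ by $\alpha_2\|\boldsymbol{u}(0)\|^{ab}$, since that is what creates your Lipschitz difficulty when $ab<1$. Instead keep $m=V(0,\boldsymbol{u}(0))/\alpha_1$, as the paper and the original proof do. This $m$ is locally Lipschitz directly from the hypothesis on $V$, and $m(\boldsymbol{0})=0$ because the two-sided bound forces $V(0,\boldsymbol{0})=0$.
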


\begin{remark}
The integer-order Lyapunov stability theorem emerges as a particular instance of the above result when the fractional order is set to $\beta=1$.
\end{remark}

\begin{figure*}[t]
\centering
\includegraphics[width=0.96\linewidth]{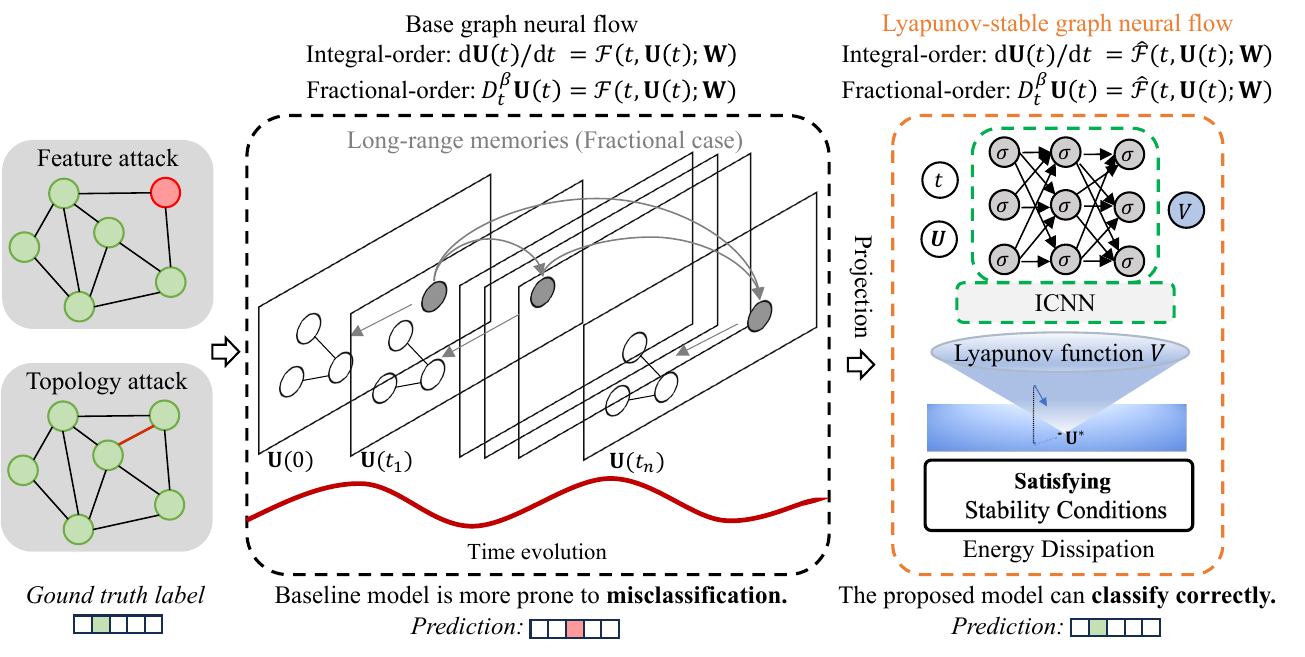}
\caption{The scratch of the Lyapunov stability module. The black dashed box illustrates the dynamic evolution of the base graph neural flow over the time interval $[0, t_n]$, where the fractional case captures long-range memories by incorporating historical states, unlike the integer-order's behavior depends solely on the current state.  The orange dashed box depicts the projected Lyapunov-stable system, which satisfies the conditions of the Lyapunov's direct method, where the Lyapunov function is implemented by an input-convex neural network, a fully connected neural network with special properties.}
\label{fig1}
\end{figure*}

\section{Methodology}

This section presents the proposed framework for enhancing the robustness of GNNs through the lens of Lyapunov stability theory. We begin by introducing the base graph neural flow, which formulates the node feature evolution as either an integer-order or fractional-order dynamical system. Next, we detail the integer- and fractional-order Lyapunov stability module, a novel mechanism that projects the base dynamics into a Lyapunov-stable space via a learnable Lyapunov function. We then describe the architecture and training pipeline of the resulting Lyapunov stable GNNs, which integrate the stability module with an equilibrium-separating classification layer to maximize inter-class margin while preserving stability. Finally, we provide a theoretical analysis of certified robustness under perturbation, demonstrating that the proposed models exhibit robustness, and thus provable resilience against adversarial perturbations.

\subsection{The Base Graph Neural Flow}

The node feature update process can be modeled as a time-evolving feature extraction procedure, where the differential equation describes the information propagation between nodes. Let $\mathbf{U}(t)$ denote the node features at time $t$, and let $\mathcal{F}(\cdot)$ represent the governing equation, the dynamic evolution of graph nodes can be expressed as a system of ODEs:

\begin{equation}
    \frac{\text{d}\mathbf{U}(t)}{\text{d} t}=\mathcal{F}(t,\mathbf{U}(t);\mathbf{W})
\end{equation}
with the initial condition $\mathbf{U}(0)$, where 
\begin{equation*}
    \mathbf{U}(t)=\left(\left[\boldsymbol{u}^{(1)}(t)\right]^\top,\cdots,\left[\boldsymbol{u}^{(N)}(t)\right]^\top\right)^\top.
\end{equation*}

Integer-order differential equations govern systems whose behavior depends solely on the current state, whereas fractional-order differential equations (FDEs) correlate a series of past states with the current state. This property grants them a distinct advantage in modeling non-uniform dynamical behaviors in complex networks, enabling more accurate characterization of cumulative memory effects and non-local properties. By incorporating the Caputo fractional derivative, this dynamic process can be reformulated as a fractional-order ODE: 
\begin{equation}
    D_t^\beta\mathbf{U}(t)=\mathcal{F}(t,\mathbf{U}(t);\mathbf{W})
\end{equation}
with initial condition $\mathbf{U}(0)$. The final node features $\mathbf{U}(T)$, which can be used for node classification, can be obtained by solving this equation up to a terminal time $t=T$.

\subsection{Integer- and Fractional-Order Lyapunov Stability Module}

Due to inherent cognitive deficiencies of deep neural networks, the robustness of both integer-order and fractional-order differential equation-based GNNs remains challenging to guarantee. We consider the failure of the current GNNs on adversarial examples to be the neglect of preserving the underlying stability structure. Therefore, we propose to impose a hard constraint concerning Lyapunov stability on the graph neural flow and introduce a family of Lyapunov-stable GNNs, which encompass the integer-order Lyapunov-stable GNNs (IL-GNNs) and the fractional-order Lyapunov-stable GNNs (FL-GNNs).

The core innovation lies in the unified Lyapunov stability module, which integrates graph neural flow with Lyapunov theory to ensure the robustness of the GNN dynamics. This module applies Lyapunov's direct method to ensure that the node features evolve in a stable manner, avoiding unbounded growth or oscillations, thereby enhancing resilience against adversarial attacks. Specifically, the stability module is designed by constructing a Lyapunov function $V$ that satisfies the conditions of Theorem~\ref{thm:int} for the case of integer-order (\textit{int}) or Theorem~\ref{thm:frac} for the case of fractional-order (\textit{frac}) and projecting the base GNN dynamics onto a stable manifold. Figure~\ref{fig1} shows the scratch of the fractional Lyapunov stability module.

Let $\mathbf{U}^{\star}$ be the equilibrium points of the GNN dynamics, the Lyapunov-stable module is defined as
\begin{equation}
\begin{aligned}
&\hat{\mathcal{F}}(t,\mathbf{U}^{\star};\mathbf{W}) \\
&= \begin{cases}\mathcal{F}(t,\mathbf{U}^{\star}) & \text { if } \phi(t,\mathbf{U}^{\star}) \leq 0, \\ \mathcal{F}(t,\mathbf{U}^{\star})-\nabla V \frac{\phi(t,\mathbf{U}^{\star})}{\|\nabla V(\mathbf{U}^{\star})\|^2} & \text {otherwise (for \textit{int})},
\\ 
\mathcal{F}(t,\mathbf{U}^{\star})-D_t^\beta V\frac{\phi(t,\mathbf{U}^{\star})}{\|D_t^\beta V\|^2} & \text {otherwise (for \textit{frac})},\end{cases} 
\end{aligned}
\label{eqn: model}
\end{equation}
where $\phi(t,\mathbf{U}^{\star})= \nabla V^\top F +c V$ for the case of integer-order and $\phi(t,\mathbf{U}^{\star}) = D_t^\beta V +\alpha_{3}\|\mathbf{U^{\star}}\|$ for the case of fractional-order. Assuming $\phi(t,\mathbf{U}^{\star}) > 0$ holds, the output of the basic model will be refined by projection operation, otherwise the output is returned unchanged. It is worth mentioning that the gradient is multiplied by the numerator, which prevents the gradient from vanishing theoretically.

The function $V$ is constructed as a trainable neural network that is designed to meet the conditions of a Lyapunov function:
\begin{equation}
V=\sigma(\text{ICNN}(\mathbf{U}^{\star})-\text{ICNN}(\mathbf{0}))+\|\mathbf{U}^{\star}\|^2,
\end{equation}
where $\sigma$ represents a positive convex activation function satisfying $\sigma(0) = 0$, and $\text{ICNN}$ refers to an input-convex neural network~\cite{amos2017input}:
\begin{equation} \label{eqn:icnn}
\begin{aligned}
\boldsymbol{q}_1 &=\sigma_0\left(W^{I}_0 \mathbf{U}^{\star}+b_0\right), \\
\boldsymbol{q}_{i+1} &=\sigma_i\left(P_i \boldsymbol{q}_i+W^{I}_i \mathbf{U}^{\star}+b_i\right), i=1, \ldots, k-1, \\
g(\mathbf{U}^{\star}) & \equiv \boldsymbol{q}_k,
\end{aligned}
\end{equation}
where $W^{I}_i$, $b_i$ denote the weights and biases of the $i$-th layer, respectively. $P_i$ are positive weights. $\sigma$ is convex and non-decreasing activation function to guarantee that each layer's operation maintains convexity, which can be chosen as smooth version of ReLU function~\cite{kolter2019learning}:
\begin{equation} \label{eqn:sigma}
\sigma(x)= \begin{cases}0, & \text { if } x \leq 0, \\ x^2 / (2 d), &  0<x<d, \\ x-d / 2, & x\geq d. \end{cases}
\end{equation}

By incorporating this projection mechanism, the resulting graph nerual flow is guaranteed to meet the requirements of the integer- or fractional-order Lyapunov's direct method. Consequently, the equilibrium points of the projected model are ensured to be Lyapunov stable for the integer-order case or Mittag–Leffler stable for the fractional-order case.

\subsection{Lyapunov Stable GNNs and their Training Process}

The proposed Lyapunov stable GNNs contain three key components, namely the base graph neural flow, the Lyapunov stability module, and an equilibrium-separating classification layer. The dynamical system parameterized by IL-GNNs and FL-GNNs can be formalized as follows: 
\begin{equation} \label{dym:int}
    \frac{\text{d}\mathbf{U}(t)}{\text{d} t}  = \operatorname{Proj}(\mathcal{F},\{\mathcal{F}: \nabla V^T F \leq - c V(t,\mathbf{U})\}),
\end{equation}
\begin{equation} \label{dym:frac}
    D_t^{\beta} \mathbf{U}(t)  = \operatorname{Proj}(\mathcal{F},\{\mathcal{F}: D_t^\beta V \leq - \alpha_{3}\|\mathbf{U}\|\}),
\end{equation}
where $\operatorname{Proj}$ denotes the projection shown in Eqn.(\ref{eqn: model}), and $\mathcal{F}$ can be chosen as one of the base dynamical systems.

\subsubsection{Choices of the base dynamical system} We consider three kinds of integer-order base dynamical system, namely GRAND~\cite{chamberlain2021grand}, GraphBel~\cite{song2022robustness} and GraphCON (GCON)~\cite{rusch2022graph}, and their corresponding fractional-order extensions as the governing equations.

GRAND and Fractional-GRAND models (F-GRAND) are governed by 
\begin{equation}
    \frac{\text{d}\mathbf{U}(t)}{\text{d} t}  = (\mathbf{A}(\mathbf{U}(t))-\mathbf{I})\mathbf{U}(t), 
\end{equation}
\begin{equation}
    D_t^{\beta} \mathbf{U}(t)=(\mathbf{A}(\mathbf{U}(t))-\mathbf{I})\mathbf{U}(t), 
\end{equation}
respectively, where $\mathbf{I}\in\mathbb{R}^{N\times N}$ is an identity matrix, $\mathbf{A}(\mathbf{U}(t))$ is a time-dependent learnable attention matrix:
\begin{equation}
	\boldsymbol{A} = a(\boldsymbol{u}_{i},\boldsymbol{u}_{j})=\mathrm{softmax}\left(\frac{(\mathbf{W}^{K} \boldsymbol{u}_{i})^\top \mathbf{W}^{Q}\boldsymbol{u}_{j}}{d_{k}}\right),
\end{equation}
where $\mathbf{W}^{K}$ and $\mathbf{W}^{Q}$ are learnable matrices, and $d_{k}$ is a hyperparameter.

GraphBel and Fractional-GraphBel models (F-GBel) are defined as
\begin{equation}
    \frac{\text{d}\mathbf{U}(t)}{\text{d} t}= (\mathbf{A}_{\mathbf{s}}(\mathbf{U}(t)) \odot \mathbf{B}_{\mathbf{s}}(\mathbf{U}(t)) - \mathbf{\Psi} (\mathbf{U}(t)))\mathbf{U}(t),
\end{equation}
\begin{equation}
    D_t^{\beta} \mathbf{U}(t)= (\mathbf{A}_{\mathbf{s}}(\mathbf{U}(t)) \odot \mathbf{B}_{\mathbf{s}}(\mathbf{U}(t)) - \mathbf{\Psi} (\mathbf{U}(t)))\mathbf{U}(t),
\end{equation}
respectively, where $\odot$ denotes the element-wise multiplication, $\mathbf{A}_{\mathbf{s}}(\cdot)$ is a learnable attention function, $\mathbf{B}_{\mathbf{s}}(\cdot)$ represents a normalization function, $\mathbf{\Psi} (\boldsymbol{u}, \boldsymbol{u})=\sum_{\boldsymbol{v}}(\mathbf{A}_{\mathbf{s}}\odot\mathbf{B}_{\mathbf{s}})(\boldsymbol{u}, \boldsymbol{v})$. 

GCON and Fractional-GraphCON model (F-GCON) can be formalized as 
\begin{equation}
	\begin{cases}
		\dfrac{\mathrm{d}\mathbf{U}(t)}{\mathrm{d} t} = \sigma (\mathcal{F}(\mathbf{U}(t),t;\mathbf{W} ))-\gamma \mathbf{U}(t) - \alpha\mathbf{Y}(t), \\[1ex]
		\dfrac{\mathrm{d}\mathbf{Y}(t)}{\mathrm{d} t} = \mathbf{U}(t).
	\end{cases}
\end{equation}
\begin{equation}
	\begin{cases}
		D_t^\beta\mathbf{U}(t)= \sigma (\mathcal{F}(\mathbf{U}(t),t;\mathbf{W} ))-\gamma \mathbf{U}(t) - \alpha\mathbf{Y}(t), \\[1ex]
		D_t^\beta\mathbf{Y}(t)= \mathbf{U}(t).
	\end{cases}
\end{equation}
respectively, where $\zeta$ is an activation function, $\gamma$ and $\alpha$ are hyperparameters.

The integer derivatives are obtained by automatic differentiation, and the fractional derivatives are computed using the fractional Adams–Bashforth scheme~\cite{diethelm2024detailed}.

\subsubsection{Details of stability module} 

The ICNN architecture is specifically designed to ensure convexity of the output with respect to the input variables, making it particularly suitable for learning Lyapunov functions. It is implemented in this work as a two-layer feedforward neural network, which is composed of linear transformations, smooth version of ReLU activation functions, and skip connections as formalized in Eqn.(\ref{eqn:icnn}). The Lyapunov stability module is built upon the ICNN to enforce the necessary conditions for a valid Lyapunov function, combining the ICNN output with a stabilization term to ensure positive definiteness. The module constructs a Lyapunov candidate and then maps the base system into a Lyapunov stable half-space.

\subsubsection{The equilibrium-separating classification layer} 

While stable equilibrium points can provide local robustness, their direct application to graph node classification faces a critical challenge. That is, equilibrium points corresponding to different classes may lie in close proximity, resulting in small and overlapping stability neighborhoods. This phenomenon may potentially diminish adversarial robustness, where perturbations can possibly push input across the decision boundary. To mitigate this problem, we augment the fractional Lyapunov stability module with an equilibrium-separating layer, which explicitly maximizes the separation between equilibriums of different classes while preserving stability guarantees. Let $\mathbf{U}^{\prime}$ denote the output of the Lyapunov stability module. To enforce separation between class-wise equilibrium points, we employ a fully connected layer to output the parametrized form of $\mathbf{U}^{\prime}$, ensuring orthogonality via the constraint$(\mathbf{U}^{\prime})^\top\mathbf{U}^{\prime}=\mathbf{I}$.

\subsubsection{The training process}

The training process for Lyapunov stable GNNs contains two stages. In the first stage, we train the base dynamical system until convergence. To determine whether the base dynamical system has reached an equilibrium state during the training, we monitor the training accuracy over successive epochs and declare convergence to an equilibrium when the fluctuations in accuracy fall below a predefined small threshold. In the second stage, we focus on training the Lyapunov stability module, mainly updating the parameters of the ICNN and the equilibrium-separating classification layer.

\subsection{Certified Robustness under Perturbation}

We investigate the impact of initial adversarial perturbations on the model's behavior. Our analysis begins by establishing the robustness of the integer-order case, subsequently extending these findings to the fractional-order framework. It should be noted that in FROND~\cite{kang2024coupling}, the fractional extension of numerical stability analysis is applied to compare the numerical results of the model before and after adversarial perturbations, focusing on whether the accumulation of errors remains controlled when numerically solving FDEs. The difference between the two results is bounded by an upper bound, leading to the conclusion that FROND exhibits inherent robustness against perturbations. For Lyapunov stable GNNs, the research problem revolves around whether the solutions of the continuous integer- and fractional-order dynamical systems deviate from the original solutions or converge to equilibrium points under minor adversarial perturbations. Here, we show that when the node features at the initial time are subjected to initial perturbations, the systems can exhibit robustness against perturbations.

\begin{prop}[]
\label{prop1}
The equilibrium points of the projected integer-order dynamical system defined in Eqn.(\ref{dym:int}) are exponentially stable.
\end{prop}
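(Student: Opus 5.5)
The plan is to verify conditions (1)--(4) of Theorem~\ref{thm:int} for the learned function $V=\sigma(\text{ICNN}(\mathbf{U})-\text{ICNN}(\mathbf{0}))+\|\mathbf{U}\|^2$ along trajectories of the projected flow (\ref{dym:int}). Throughout, I take $\mathbf{U}^{\star}=\mathbf{0}$ after translating coordinates, as the normalization $\text{ICNN}(\mathbf{U})-\text{ICNN}(\mathbf{0})$ implicitly does.

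The first step is the sandwich bound (3). Since $\sigma\ge 0$ and $\sigma(0)=0$, we get $V(\mathbf{U})\ge\|\mathbf{U}\|^2$ everywhere and $V(\mathbf{0})=0$. This also gives condition (1): $V$ attains its unique minimum at the equilibrium. For the upper bound on a ball $\{\|\mathbf{U}\|\le\alpha\}$, I will use that each ICNN layer is a composition of affine maps with the activation (\ref{eqn:sigma}), which is $1$-Lipschitz. Hence $\text{ICNN}$ is Lipschitz with some constant $L$ (the product of the operator norms of $W^I_i$ and $P_i$), so $|\text{ICNN}(\mathbf{U})-\text{ICNN}(\mathbf{0})|\le L\|\mathbf{U}\|$. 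Because $\sigma(x)\le x^2/(2d)$ on $[0,d]$, for $\|\mathbf{U}\|\le d/L$ we obtain $\sigma(\cdot)\le L^2\|\mathbf{U}\|^2/(2d)$. For larger $\|\mathbf{U}\|\le\alpha$ we use $\sigma(x)\le x\le L\|\mathbf{U}\|\le (L\alpha\cdot L/d)\|\mathbf{U}\|^2$ there. Either way $V\le K(\alpha)\|\mathbf{U}\|^2$ for an explicit $K(\alpha)=1+\max\{L^2/(2d),\,L^2\alpha/d\}$.

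The second step is the decay condition (4), which is where the projection (\ref{eqn: model}) does the work. Along solutions, $\dot V=\nabla V^\top\hat{\mathcal{F}}$. If $\phi=\nabla V^\top\mathcal{F}+cV\le 0$, then $\hat{\mathcal{F}}=\mathcal{F}$ and $\dot V=\nabla V^\top\mathcal{F}\le -cV$ directly. Otherwise,
\begin{equation*}
\dot V=\nabla V^\top\mathcal{F}-\|\nabla V\|^2\frac{\phi}{\|\nabla V\|^2}=\nabla V^\top\mathcal{F}-\phi=-cV.
\end{equation*}
So $\dot V\le -cV$ holds in both cases, which also gives strict decrease away from $\mathbf{0}$, i.e.\ condition (2). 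I will note two technical points. First, $\nabla V\neq 0$ for $\mathbf{U}\neq 0$, which follows from convexity of $V$ (convex $\sigma$ nondecreasing composed with convex ICNN, plus a strictly convex quadratic) with unique minimizer $\mathbf{0}$. Second, $V$ is $C^1$, because the smoothed ReLU (\ref{eqn:sigma}) is $C^1$.

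Finally, I will invoke Theorem~\ref{thm:int}, or equivalently Grönwall: $V(t)\le e^{-ct}V(0)$. Combined with (3) this gives $\|\mathbf{U}(t)\|^2\le K(\alpha)e^{-ct}\|\mathbf{U}(0)\|^2$, i.e.\ $\|\mathbf{U}(t)\|\le\sqrt{K(\alpha)}\,e^{-ct/2}\|\mathbf{U}(0)\|$, which is exponential stability. One further check is needed: trajectories starting in a small enough ball stay within $\{\|\mathbf{U}\|\le\alpha\}$. This follows because $V$ is nonincreasing, so starting with $\|\mathbf{U}(0)\|\le\alpha/\sqrt{K(\alpha)}$ suffices. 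The main obstacle I expect is well-posedness of the projected vector field. $\hat{\mathcal{F}}$ is only continuous, not obviously Lipschitz, across the switching surface $\phi=0$, and $\phi/\|\nabla V\|^2$ may blow up near $\mathbf{0}$. I will handle this by showing that $\max(\phi,0)$ is locally Lipschitz and that $\|\nabla V\|\ge 2\|\mathbf{U}\|$ away from the origin. That gives local existence of solutions, and the decay estimate keeps them bounded.
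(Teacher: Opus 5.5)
Your proposal is correct and follows essentially the same route as the paper's proof (quadratic sandwich bound on $V$, the decay $\dot V\le -cV$ enforced by the projection, then integration to $V(t)\le V(0)e^{-ct}$). It also fills in steps the paper only asserts: the computation $\dot V=-cV$ on the active branch, $\nabla V\neq 0$ away from the origin, invariance of the ball, and the sharper factor $\sqrt{K(\alpha)}$. One minor slip is that your second-case step $L\|\mathbf{U}\|\le (L^2\alpha/d)\|\mathbf{U}\|^2$ needs $\alpha\ge 1$ (the right factor is $L^2/d$). This is harmless, because $x-d/2\le x^2/(2d)$ for $x\ge d$ gives $\sigma(x)\le x^2/(2d)$ for all $x\ge 0$, so $K=1+L^2/(2d)$ works globally and the case split is unnecessary.
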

\begin{proof}
From Eqn.(\ref{eqn:sigma}), it can be seen that when $x$ takes values in $[0,\infty)$, the activation function $\sigma$ is linear, whereas when $x$ takes values in $(0,d)$, $\sigma$ is quadratic. Therefore, the function $V$ has both upper and lower bounds. That is, $\forall \alpha>0$, assuming $\mathbf{U} \in \{\mathbf{U}: \| \mathbf{U} \| \leq \alpha\}$, there exists a positive constant $K(\alpha)$ such that 
\begin{equation*}
    \|\mathbf{U}\|^2 \leq V(t,\mathbf{U}(t)) \leq K(\alpha) \|\mathbf{U}\|^2,
\end{equation*}
that is, the first condition of Theorem~\ref{thm:int}.

Since the projected system defined in Eqn.(\ref{dym:int}) satisfies condition (4) of Theorem~\ref{thm:int}, we have,
\begin{equation*}
\frac{d V(t,\mathbf{U}(t))}{V(t,\mathbf{U}(t))} \leq -c d t.
\end{equation*}

Integrating both sides of the above equation yields
\begin{equation*}
V(t,\mathbf{U}(t)) \leq V(0,\mathbf{U}(0)) e^{-ct} \quad(t \geq 0).
\end{equation*}

Therefore, we can further obtain
\begin{equation*}
\|\mathbf{U}\|^2 \leq V(0,\mathbf{U}(0)) \leq K(\alpha) \|\mathbf{U}(0)\|^2 e^{-ct}.
\end{equation*}

This implies $\|\mathbf{U}\| \leq K(\alpha)|\mathbf{U}(0)\| e^{-ct/2}$. Hence, the equilibrium points $\mathbf{U}^{\star}$ are exponentially stable.
\end{proof}

\begin{prop}\label{prop2}
The equilibrium points of the projected fractional-order dynamical system defined in Eqn.(\ref{dym:frac}) are Mittag-Leffler stable.
\end{prop}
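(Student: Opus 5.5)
The plan is to verify the two hypotheses of Theorem~\ref{thm:frac} for the learned function $V=\sigma(\text{ICNN}(\mathbf{U})-\text{ICNN}(\mathbf{0}))+\|\mathbf{U}\|^2$ along trajectories of the projected fractional system (\ref{dym:frac}), and then apply that theorem directly. This parallels Proposition~\ref{prop1}: there the decay condition was integrated explicitly, whereas here the integration step is absorbed into Theorem~\ref{thm:frac}. Throughout I take the equilibrium shifted to the origin, since $V$ is centred at $\mathbf{0}$, and I work on a bounded region $\mathds{D}=\{\mathbf{U}:\|\mathbf{U}\|\le\alpha\}$.

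\textbf{Step 1 (sandwich bound (\ref{eqn:lya01})).} I would take $a=2$ and $b=1$. The lower bound $\|\mathbf{U}\|^2\le V$ holds because $\sigma\ge 0$, which gives $\alpha_1=1$. For the upper bound, the ICNN in (\ref{eqn:icnn}) is a composition of affine maps with the smoothed ReLU (\ref{eqn:sigma}). That activation is $C^1$ with slope in $[0,1]$, so the ICNN is Lipschitz on $\mathds{D}$ with some constant $L$. Since $\sigma(0)=0$, $\sigma'(0)=0$, and $\sigma$ is quadratic near $0$ and then linear, we have $\sigma(x)\le x^2/(2d)$ for small $x$ and $\sigma(x)\le x$ in general. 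Hence
\[
\sigma(\text{ICNN}(\mathbf{U})-\text{ICNN}(\mathbf{0}))\le \max\{L^2/(2d),\,L/\|\mathbf{U}\|\}\,\|\mathbf{U}\|^2 .
\]
On the annulus $\|\mathbf{U}\|\ge d/L$ the ratio $V/\|\mathbf{U}\|^2$ is continuous and bounded, so a single constant $\alpha_2=K(\alpha)$ works on $\mathds{D}$. The same reasoning shows $V$ is $C^1$ and locally Lipschitz, as Theorem~\ref{thm:frac} requires.

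\textbf{Step 2 (fractional decay (\ref{eqn:lya02})).} By construction (\ref{eqn: model}), the projected field satisfies $\phi\le 0$ wherever it is active. Either $\phi\le0$ already holds, or the projection subtracts exactly the component that sets $\phi=0$. This yields $D_t^\beta V\le-\alpha_3\|\mathbf{U}\|$. To get the form $-\alpha_3\|\mathbf{U}\|^{ab}=-\alpha_3\|\mathbf{U}\|^2$, I would use $\|\mathbf{U}\|\ge\|\mathbf{U}\|^2/\alpha$ on $\mathds{D}$, which gives $D_t^\beta V\le-(\alpha_3/\alpha)\|\mathbf{U}\|^2$. Alternatively one could simply read the constraint set with the squared norm.

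\textbf{Step 3.} With $a=2$, $b=1$ and constants $\alpha_1$, $\alpha_2$, $\alpha_3/\alpha$ established, Theorem~\ref{thm:frac} applies and gives Mittag--Leffler stability. Explicitly, $\|\mathbf{U}(t)\|\le\bigl[(\alpha_2/\alpha_1)\|\mathbf{U}(0)\|^2E_\beta(-\tfrac{\alpha_3}{\alpha\alpha_2}t^\beta)\bigr]^{1/2}$. Asymptotic stability then follows by the Remark.

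\textbf{Main obstacle.} The hardest step is justifying Step 2 rigorously. For the integer case the projection is a genuine pointwise half-space projection, because $\dot V=\nabla V^\top\mathcal{F}$ is linear in $\mathcal{F}$. In contrast, $D_t^\beta V$ is a nonlocal functional of the whole trajectory and is not linear in the instantaneous field. So the claim that subtracting $D_t^\beta V\,\phi/\|D_t^\beta V\|^2$ enforces $\phi\le0$ must be interpreted as a condition imposed on the trajectory. One route is the inequality $D_t^\beta V\le\nabla V^\top D_t^\beta\mathbf{U}$, valid for convex $V$ (Aguila-Camacho type). Since $V$ is convex (an ICNN composed with a convex nondecreasing $\sigma$, plus $\|\cdot\|^2$), this reduces the fractional decay to a pointwise condition $\nabla V^\top\hat{\mathcal F}\le-\alpha_3\|\mathbf{U}\|$, which a half-space projection in the direction $\nabla V$ guarantees. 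I would present the proof under this convexity-based reduction.
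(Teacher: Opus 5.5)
Your proposal is correct and has the same skeleton as the paper's proof. Both use the sandwich bound with $a=2$, $b=1$, take the decay inequality from the projection, and then derive the Mittag--Leffler estimate. The only structural difference is the last step. Instead of citing Theorem~\ref{thm:frac}, the paper writes $D_t^\beta V+M(t)=-\frac{\alpha_3}{\alpha_2}V$ with a slack $M(t)\ge 0$. It then Laplace-transforms and drops the nonnegative convolution term to get $V(t)\le V(0)E_\beta(-\frac{\alpha_3}{\alpha_2}t^\beta)$. That is just the proof of Theorem~\ref{thm:frac} written out, so citing it costs nothing.

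You go beyond the paper in two steps it merely asserts. First, you reconcile the constraint's $\alpha_3\|\mathbf{U}\|$ with the required $\alpha_3\|\mathbf{U}\|^2$. Your constant $\alpha_3/\alpha$ is valid only while the trajectory stays in the ball. You can secure this for small $\|\mathbf{U}(0)\|$ by a first-exit argument using the bound itself, since the Caputo derivative on $[0,t]$ sees only the past. Second, you justify the decay via the convex-$V$ inequality $D_t^\beta V\le\nabla V^\top D_t^\beta\mathbf{U}$. Say explicitly that this convexity route proves the proposition for the half-space projection along $\nabla V$ with threshold $\alpha_3\|\mathbf{U}\|$. It does not cover the literal fractional branch of the paper's projection formula, which uses the scalar $D_t^\beta V$ as a direction and whose enforcement of $\phi\le 0$ the paper takes for granted.

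Step 1 can also be simplified. In fact $\sigma(x)\le x^2/(2d)$ for every real $x$; for $x\ge d$ this reduces to $(x-d)^2\ge 0$. Monotonicity of $\sigma$ and global $L$-Lipschitzness of the ICNN then give $\alpha_2=1+L^2/(2d)$ directly, with no annulus argument. Your intermediate claim $\sigma(x)\le x$ is false for $x<0$, though harmlessly.
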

\begin{proof}
Let $a=2$ and $b=1$, the function $V$ satisfies the first condition of Theorem~\ref{thm:frac}:
\begin{equation} \label{eqn:condition1}
     \alpha_{1}\|\mathbf{U}\|^2\leq V(t,\mathbf{U}(t))\leq  \alpha_{2}\|\mathbf{U}\|^{2},
\end{equation}

Since the projection operation ensure the fractional system satisfied Eqn.(\ref{eqn:lya02}), there exists a nonnegative function $M(t)$ satisfying
\begin{equation} \label{eqn:proof1}
    D_t^{\beta} V(t,\mathbf{U}(t)) + M(t) = - \frac{\alpha_3}{\alpha_2}V(t,\mathbf{U}(t)).
\end{equation}

Performing the Laplace transform of Eqn.(\ref{eqn:proof1}) gives,
\begin{equation*}
    s^{\beta}V(s) - V(0)  s^{\beta-1} +M(s)= - \frac{\alpha_3}{\alpha_2}V(s),
\end{equation*}
where $s$ represents the variable in Laplace domain.

By leveraging Lipschitz condition and the inverse Laplace transform, we can obtain the solution of Eqn.(\ref{eqn:proof1}):
\begin{equation} \label{eqn:proof2}
    \begin{aligned}
         V(t)&= V(0)E_\beta (-\frac{\alpha_3}{\alpha_2}t^{\beta})- \underbrace{M(t) \ast \left[ t^{\beta-1}E_\beta(-\frac{\alpha_3}{\alpha_2}t^{\beta})\right]}_{\geq0}\\ & \leq V(0) E_\beta (-\frac{\alpha_3}{\alpha_2}t^{\beta}).
    \end{aligned}
\end{equation}

Substituting Eqn.(\ref{eqn:proof2}) to Eqn.(\ref{eqn:condition1}), we have 
\begin{equation} \label{ml_stability}
    \| \mathbf{U}(t) \| \leq \left[ m E_\beta ( - \frac{\alpha_3}{\alpha_2} t^{\beta})\right]^{\frac{1}{2}},
\end{equation}
where $m = V(0,\mathbf{U}(0))/\alpha_1 \geq0$ and $m=0$ holds if and only if $\mathbf{U}(0)=0$, which imply the projected equilibrium points are Mittag–Leffler stable.
\end{proof}

\begin{remark}
The proposed Lyapunov stable GNNs can be viewed as special association memory networks~\cite{krotov2025modern}, where the functionality of association memory is modeled by an energy function and the equilibrium points (the local minima of the energy) serve as memories. The energy landscape of the system is shaped by the Lyapunov function, which ensures the positive definiteness and monotonic decrease over time, corresponding to the process of writing information into the network (i.e. learning). The dynamical trajectory of the system represents the memory recall (i.e. inference), where features evolve toward stable equilibrium points. From this point of view, an association occurs between the initial state of the system and its asymptotic state. The stability property ensures that small perturbations to the initial state (provided they remain within the basin of attraction) are automatically corrected by the system dynamics, aligning with the certified robustness of the proposed models.
\end{remark}

\section{Experiments}
This section elaborates on the experimental setup and comprehensive performance analysis. We select representative benchmarks tailored to the task characteristics, with standardized pre-processing to ensure reproducibility and fairness. For comparison, we adapt several state-of-the-art models to comprehensively evaluate the performance advantages of the proposed approach. In our experiments, all implementations are developed in PyTorch~\cite{paszke2017automatic} and executed on a single NVIDIA GeForce RTX 3090 GPU with 24GB of memory. The source codes are available at \url{https://anonymous.4open.science/r/test-LSGNF}.

\begin{table*}[t]
\caption{Benchmarks details and the attacks's budgets of graph injection attacks (GIA)} 
\centering
\resizebox{0.6\linewidth}{!}{
\begin{tabular}{ccccccc}
\toprule
Dataset   & \# Nodes & \# Edges & \# Features & \# Classes & \begin{tabular}[c]{@{}c@{}}max \# Nodes\\ (for GIA)\end{tabular} & \begin{tabular}[c]{@{}c@{}}max \# Edges\\ (for GIA)\end{tabular} \\ \hline
Cora      & 2708     & 5429     & 1433        & 7          & 60                                                               & 20                                                               \\ \hline
Citeseer  & 3327     & 4732     & 3703        & 6          & 90                                                               & 10                                                               \\ \hline
Computers & 18333   & 81894   & 6805       & 15         & 300                                                              & 150                                                              \\ \hline
PubMed    & 19717    & 44338    & 500         & 3          & 200                                                              & 100                                                              \\ \bottomrule
\end{tabular}}
\label{tab:01}
\end{table*}

\begin{table*}[t]
\caption{Node classification accuracy (\%) on graph injection, evasion, non-targeted, black-box attack in \textit{INDUCTIVE} learning. Results under adversarial attacks that surpass the competing methods are \textbf{bold}. Second best results are with the \underline{underline}}
\resizebox{\linewidth}{!}{
\setlength{\tabcolsep}{2pt}
\begin{tabular}{c|c|cccc|cccc|cccc|c}
\toprule
Dataset   & Attack  & GRAND  & IL-GRAND & F-GRAND & FL-GRAND & GBel  & IL-GBel &F-GBel & FL-GBel & GCON  & IL-GCON & F-GCON & FL-GCON & HANG\\ \hline
\multirow{4}{*}{Cora} 
&clean   &82.24±1.82 &88.88±0.97 &86.44±0.31 &88.95±0.39  &79.07±0.46 &85.73±0.73 &77.55±0.79  &86.74±0.93   &83.10±0.63  &86.97±0.60 &82.42±0.89  &87.10±0.80 & 87.13±0.86\\
&PGD  &36.80±1.86  &79.01±1.24 &56.38±6.39  &80.76±0.33     &63.93±3.88  &\underline{80.86±0.73}  &69.50±2.83  &\textbf{82.60±0.67} &48.38±2.44   &72.71±0.38  &56.70±4.36  &74.28±0.69    &  78.37±1.84 \\
&TDGIA   &40.0±3.52  &73.41±1.17  &54.88±6.72  &78.06±0.39  & 53.22±2.95  &73.67±1.21  &56.94±1.82  &\textbf{81.87±0.60}  &46.43±2.82 & 60.46 ±3.85   &54.24±2.54     &77.06±0.73      &  \underline{79.76±0.99} \\
& MetaGIA & 37.89±1.56   &74.64±1.64  & 53.36±5.31 &   78.69±0.36      &66.74±3.23   &\underline{82.20±1.66}    &71.98±1.32   &   \textbf{82.84±0.56}  &52.21±2.71  &61.56±1.51   &63.97±2.09      & 67.74±0.92      & 77.48±1.02\\ \hline
\multirow{4}{*}{Citeseer}  
&clean   &72.52±0.73  &75.47±0.64  &71.91±0.43  &75.66±0.48  &74.75±0.28 &74.58±0.70  & 71.09±0.30  &75.03±0.47  &72.07±0.93  &63.08±1.02  &73.50±0.43 &74.61±0.80  & 74.11±0.62 \\
& PGD    &42.20±2.77  &71.62±1.39  &61.26±1.23   &72.04±1.23  & 47.73±5.87  &\underline{72.79±0.52}   &60.78±2.37  &\textbf{73.22±1.23}     &37.71±7.0  &61.03±0.95   &54.47±1.0  & 61.52±0.77   & 72.31±1.16 \\
& TDGIA   &30.02±1.33    &71.65±0.64  & 50.74±1.20  & \textbf{73.14±0.47} &47.88±1.83  &71.68±1.33   &65.52±0.55  &\underline{72.17±0.79}     & 30.93±3.00 &60.19±0.83  &54.71±1.69    &64.08±0.98    & 72.12±0.52 \\
& MetaGIA & 30.42±1.87 &71.98±1.32  &55.50±1.72  &72.17±0.97     &39.13±1.19    &72.28±0.74
&60.85±1.88  &\textbf{73.31±0.59}   &29.09±2.01  &66.93±0.84  & 48.82±3.27     &  70.52±0.94     &  \underline{72.92±0.66}\\ \hline
\multirow{4}{*}{Computers} 
& clean   &92.53±0.34 &90.43±0.26  &92.61±0.20    &90.44±0.13 &88.12±0.33    &87.48±0.55   &88.02±0.24  &87.98±0.51    &91.30±0.20  &91.44±0.25    &91.86±0.38   &91.63±0.31  &  -  \\
&PGD     & 70.45±11.03  &89.89±0.30  & 89.90±1.33   & 90.02±0.33 & 87.38±0.37 &87.32±0.57     &87.60±0.33  & 87.66±0.60    &81.28±7.99  & 91.11±0.28     &\underline{91.36±0.74}    &\textbf{91.55±0.27}      &  -  \\
&TDGIA   &65.45±14.30 & \textbf{91.13±0.21}  &84.71±1.52  &90.44±0.19         & 87.67±0.40  & 87.69±0.49     
&87.81±0.28   &87.91±0.48    &68.70±15.67   &\underline{90.75±0.25} &90.45±0.71    &90.63±0.31     & - \\
& MetaGIA & 70.01±9.32  &89.71±0.43  &87.50±3.17   &  90.23±0.36          &87.77±0.22 &87.94±0.56       
&87.37±0.23   &   87.97±0.55    &82.43±8.42 & \textbf{91.75±0.25}  & 90.51±0.88      &\underline{91.26±0.57}      & - \\ \hline
\multirow{4}{*}{Pubmed}   
& clean   & 88.44±0.34  &86.89±0.65  &88.39±0.47  & 87.43±0.42  & 88.18±1.89 &90.25 ±0.20
&89.51±0.12    &  93.78±0.13     & 88.09±0.32 &86.73±0.25  & 90.30±0.11    &  87.27±0.74    &  89.93±0.27   \\
& PGD     & 44.61±2.78 &63.01±0.40 & 59.62±11.66 &   61.74±0.31  & 67.81±12.23   &  69.06±0.26
& 72.04±2.08   &  \underline{74.18±0.09}    & 45.85±1.97 &61.49±0.29   & 51.16±6.04     & 61.77±0.22       &   \textbf{81.81±1.94}\\
& TDGIA   & 46.26±1.32  &60.10±0.94  & 54.31±2.38 & 60.64±1.10         & 68.66±10.64  &\underline{73.15±0.34}  
&72.49±0.83   &  72.93±0.18   & 45.57±2.02 &69.81±0.33   & 55.50±4.03     &  70.21±0.32     &  \textbf{86.62±1.05}\\
& MetaGIA & 44.07±2.11 &66.93±0.40 &61.62±9.05  &68.68±1.24      &64.64±9.70   & 76.13±0.32  
& 79.16±0.87   &  \underline{80.20±0.19}     & 45.81±2.81  &56.63±0.25    & 52.03±5.53     &   56.11±0.69    &  \textbf{87.58±0.75}   \\ \bottomrule
\end{tabular}}
\label{tab:black}
\end{table*}

\begin{table*}[t]
\caption{Classification accuracy (\%) under modification, poisoning, non-targeted attack (Metattack) in \textit{TRANSDUCTIVE} learning}
\resizebox{\linewidth}{!}{
\setlength{\tabcolsep}{2pt}
\begin{tabular}{c|c|cccc|cccc|cccc|c}
\toprule
Dataset                    & Ptb(\%)  & GRAND &IL-GRAND & F-GRAND & FL-GRAND & GBel &IL-GBel & F-GBel & FL-GBel & GCON &IL-GCON & F-GCON & FL-GCON & HANG\\ \hline
\multirow{6}{*}{Cora} 
& 0  & 82.24±1.82 &78.56±0.39 & 81.25±0.89  & 81.75±0.75  & 80.28±0.87 &79.32±0.49
& 79.05±0.73   & 78.40 ±0.18    & 83.10±0.79  &80.67±0.09  & 80.91±0.54   &  80.00±0.06 &- \\
& 5    & \underline{78.97±0.49} &77.04±0.49  & 78.84±0.57    & \textbf{81.04±0.12}  & 77.70±0.66  &77.53±0.47   &76.10±0.74   & 76.26±0.55      & 77.90±1.14  &77.11±0.21  & 77.80±0.44     &  76.20±0.33  &-  \\
& 10  & 75.02±1.25  &76.58±0.15  & \underline{76.61±0.15}  & \textbf{80.08±0.29} & 74.30±0.88  &74.19±0.21  & 74.03±0.47  & 75.15±0.29  &72.53±1.08 &75.55±0.20    & 74.63±1.42     &  75.20±0.52  &  -    \\
& 15 & 71.43±1.09 &\underline{76.48±0.17} & 73.42±0.97 &   \textbf{79.83±0.27}  &72.14±0.69 &72.35±0.70     &73.01±0.75  &74.31±0.63  &69.83±0.68  &73.99±0.27  & 73.01±0.78   &   73.22±0.55   &   -  \\
& 20 & 60.53±1.99 &\underline{76.48±0.20} & 69.27±2.10 & \textbf{78.20±0.43}  & 65.41±0.99  &72.91±0.48    & 69.35±1.23  & 73.64±0.09   & 57.28±1.62  &72.48±0.22    & 69.23±1.35      & 72.99±0.39  & -\\ 
& 25 & 55.26±2.14 &\underline{75.99±0.20}  & 64.47±1.83  & \textbf{76.70±0.60}  & 62.31±1.13  &71.72±0.33    & 67.63±0.93  &72.72±0.22  &53.17±1.52  &70.02±0.73   & 65.27±1.33     & 72.17±0.61  & \\ \hline
\multirow{6}{*}{Citeseer} 
& 0  & 71.50±1.10 &71.94±0.31  & 71.37±1.34  & 71.65±0.49 & 71.37±1.34 &71.05±0.43
&  68.90±1.15  & 71.06±0.22    & 70.48±1.18   &71.42±0.38 & 71.49±0.71    & 70.46±0.14 & -\\
& 5    & 71.04±1.15 & \textbf{71.61±0.43} & \underline{71.47±0.9}6  & 69.89±0.16   & 68.45±1.02  &70.24±0.65   & 68.36±0.93  &  70.66±0.14     & 69.75±1.63 &70.33±0.62   & 70.77±1.15    &  70.89±0.45  & -\\
& 10  & 68.88±0.60 &\textbf{71.27±0.83}  & 69.76±0.71   & 69.78±0.12  & 69.54±0.82 &69.12±0.10    & 66.72±1.31   & 70.20±0.38     & 67.40±1.78 &70.51±0.19    & 69.54±0.82     & \underline{70.93±0.24}   &    -   \\
& 15 &66.35±1.37 &69.40±0.57 & 67.94±1.42   & 68.43±0.22       & 63.63±1.67 & 68.73 ±0.23    &63.56±1.95 &   69.98±0.15    & 65.78±1.97 &\underline{70.00±0.83} &67.37±0.87      &  \textbf{70.30±0.26}   &   -   \\
& 20 & 58.71±1.42 &68.62±0.32 &64.18±0.93 & 68.03±0.13       & 58.90±0.84  & 66.56 ±0.32  & 63.38±0.96   & 68.37±0.15      & 56.79±1.46  &\underline{70.03±0.07}  & 66.52±0.68     &  \textbf{70.23±0.21}  & - \\ 
& 25 & 60.15±1.37 &65.30±0.20 & 65.46±1.12  &  67.23±0.12        &61.24±1.28  &65.18±0.22     & 64.60±0.48   &  67.71±0.17    & 57.30±1.38  & \textbf{69.77±0.10} & 66.72±1.12    &  \underline{69.20±0.23} & -\\ \hline
\multirow{6}{*}{Pubmed} 
& 0  & 85.06±0.26 &81.61±0.04 & 82.28±0.10  & 85.67±0.08  & 84.02±0.26
&82.17±0.03 & 86.15±0.14   & 86.18±0.01     & 84.65±0.13 &83.52±0.03   & 87.19±0.09   & 87.23±0.03  &85.08±0.20 \\ 
& 5    & 84.11±0.30 &79.24±0.04 &81.27±0.21   & 85.56±0.03      & 83.91±0.26   &82.13±0.12   & 85.91±0.14   &   85.93±0.02    & 83.06±0.22  &79.58±0.03  & \underline{86.38±0.19}     & \textbf{86.58±0.12} & 85.08±0.18 \\ 
& 10  &84.24±0.18  &76.11±0.11  & 80.64±0.19  & 85.31±0.06 & 84.62±0.26 &81.55±0.09    & 85.90±0.20   &   85.48±0.01    & 82.25±0.12 &76.35±0.02  & \underline{85.90±0.20}     & \textbf{86.13±0.15}  &   85.17±0.23 \\
& 15 & 83.74±0.34 &72.64±0.07 & 79.95±0.25  & 85.09±0.02   & 84.83±0.20  & 81.06±0.06   & 85.45±0.33  & 85.34±0.28    & 81.26±0.33 &72.92±0.12  & \underline{85.78±0.37}    &   \textbf{85.90±0.01} &    85.0±0.22   \\
& 20 & 83.58±0.20 &70.38±0.08  & 79.96±0.21 & 84.54±0.09      & 84.89±0.45 &81.29±0.04    & 85.44±0.16   & 85.03±0.41    & 81.58±0.41 &70.31 ±0.02   &\underline{85.55±0.02}     &   \textbf{85.74±0.19} & 85.20±0.19 \\ 
& 25 & 83.66±0.25 &68.15±0.03 & 79.95±0.26 & 84.22±0.03    &85.07±0.15  & 80.95±0.10     & 85.19±0.14   &  85.15±0.11    &80.75±0.32 &68.06±0.05    &    \underline{85.37±0.23}  & \textbf{85.83±0.17}  & 85.06±0.17\\ \bottomrule
\end{tabular}}
\label{tab:ptb}
\end{table*}

\begin{table*}[t]
\caption{Node classification accuracy (\%) on graph injection, evasion, non-targeted, \textbf{white-box} attack in inductive learning. AT is the abbreviation for adversarial training.} 
\resizebox{\linewidth}{!}{
\begin{tabular}{c|c|cccc|cccc|cccc|c}
\toprule
Dataset    & Attack  & GRAND &  IL-GRAND &FL-GRAND & FL-GRAND+AT & GBel & IL-GBel & FL-GBel  & FL-GBel+AT &  GCON &IL-GCON & FL-GCON & FL-GCON+AT & HANG\\ \hline
\multirow{3}{*}{Cora} 
& clean   & 87.53±0.59  &89.12±0.23  & 89.58±0.60 &  89.24±0.84& 86.13±0.51 &86.47±0.58 & 87.03±1.06 &  85.79±0.84    &84.69±1.10  &87.97±0.23   &   85.53±0.96    &87.13±0.86 &  88.31±0.48 \\
& PGD     &36.02±4.09 &70.01±0.88 &\underline{72.22±0.64}   &\textbf{72.75±0.91}  & 37.16±1.69 &59.87±2.26 & 59.43±2.43  & 61.35±1.96 &35.83±0.71 &53.45±1.01   &52.96±1.19   & 53.90±1.24    & 67.69±3.84     \\
& TDGIA   &14.72±1.97 &58.55±1.34 & 56.65±2.91  & 59.32±1.84  & 15.46±1.98 &66.98±2.14 &\underline{67.54±2.01}&\textbf{69.12±2.46}    & 33.05±1.09 &55.14±1.96 & 54.44±1.40     & 55.90±2.95   & 64.54±3.95    \\ \hline
\multirow{3}{*}{Citeseer} 
& clean   & 74.98±0.45 & 74.66±0.63  & 75.03±0.66  &  74.88±0.59 & 69.62±0.56
&74.56±1.17 &74.69±1.19  &73.04±0.51    & 74.84±0.49   &64.09±2.48   &  74.36±1.04     &73.51±2.00 & 74.11±0.62  \\
& PGD     &38.57±1.94 &66.58±0.77  &66.79±0.98  & \underline{67.97±1.16}  &32.24±1.21 & 67.02±1.25 &66.21±1.50 & \textbf{70.46±2.13}   & 42.78±1.54 &52.00±4.27   &58.23±4.69    &65.23±1.19     &67.54±1.52   \\
    & TDGIA   & 30.11±1.43 &54.91±0.66  &54.31±0.86   &55.37±0.45  & 16.26±1.20 &75.35±0.58 & \underline{75.63±0.77}& \textbf{76.09±0.79}   & 33.55±1.10 &66.14±1.38 &66.46±0.99      &67.01±2.04   &   63.29±3.15   \\ \hline
\end{tabular}}
\label{tab:white}
\end{table*}

\subsection{The Experimental Setup}

\subsubsection{Datasets} We evaluate the family of Lyapunov stable GNNs, namely the IL-GNNs and FL-GNNs, on four datasets, including different graph-structures, such as citation networks and co-purchasing graphs. Citation networks, such as Cora~\cite{McCallum2000AutomatingTC}, Citeseer~\cite{Sen2008CollectiveCI}, and Pubmed~\cite{Namata2012QuerydrivenAS}, represent academic articles as nodes and citation relationships as edges. The features of each node typically include text embeddings or topic distributions of the article, while the node labels correspond to the classification of the article. Co-purchasing graphs, such as Computer~\cite{McAuley2015ImageBasedRO}, model products as nodes, with edges indicating frequently co-purchased item pairs. These datasets cover multiple fields and vary in scale, offering a comprehensive foundation for the proposed models.

\subsubsection{Adversarial attacks} To ensure a thorough evaluation of graph adversarial attacks, we perform two kinds of attack scenarios: graph injection attacks (GIA) in the inference phase and graph modification attacks (GMA) in the training phase. 

For GIA, we adapt three strategies: (1) PGD-GIA~\cite{chen2022hao}, which randomly injects nodes and modifies their features using projected gradient descent; (2) TDGIA~\cite{zou2021tdgia}, a topology-aware method that exploits structural deficiencies to guide edge creation while optimizing node features via a loss function; and (3) MetaGIA~\cite{chen2022hao}, which dynamically optimizes both node features and graph structure through iterative gradient-based updates. The selected attack parameters maintain uniformity with the criteria established in the literature. To mitigate potential bias in adversarial robustness evaluation, we adopt a data filtering strategy proposed by Zheng et al.~\cite{Zheng2021GraphRB}, which removes nodes with the lowest 5\% degrees and the highest 5\% degrees, ensuring a balanced testbed for adversarial scenarios. The benchmarks and attack parameters employed in the paper are detailed in Table~\ref{tab:01}.

For GMA, the experiments strictly adhere to the established benchmark configuration~\cite{jin2020graph}. The perturbation rate is increased from 0\% (representing the clean graph) to 25\% in increments of 5\%, enabling a consistent comparison across different attack intensities.

\begin{table*}[t]
\small 
\centering
\caption{Classification accuracy (\%) of the IL- and FL-GNNs \textbf{without equilibrium-separating (ES) layer} on graph injection, evasion, non-targeted attack in inductive learning}
\begin{tabular}{c|c|ccc|ccc}
\toprule
\thead{Dataset}        & \thead{Attack}  & \thead{IL-GRAND \\ (w/o ES layer)} & \thead{IL-GBel \\ (w/o ES layer)} & \thead{IL-GCON \\ (w/o ES layer)} & \thead{FL-GRAND \\ (w/o ES layer)} & \thead{FL-GBel  \\ (w/o ES layer) } & \thead{FL-GCON  \\ (w/o ES layer)} \\ \hline
\multirow{4}{*}{Cora} 
& clean   &87.77±1.01 &86.29±1.07 &85.47±0.75 & 88.18±1.38    & 85.98±1.53  &  84.33±0.46     \\
& PGD   &78.88±0.92 &79.75±0.46 &70.27±0.72 & 78.84±1.24   & 78.00±0.87  &   70.18±0.89       \\
& TDGIA  &72.40±0.83 &74.06±1.27 &69.62±0.96 & 74.76±1.18  & 74.78±0.99 &   76.43±1.31   \\
& MetaGIA &76.44±0.89 &76.20±0.87 &61.50±1.51 & 76.98±0.58  & 83.02±1.54 & 64.08±1.55      \\ \hline
\multirow{4}{*}{Citeseer}  
& clean   &73.39±0.85 &72.93±0.63 &66.70±1.68  &73.26±1.08  & 73.52±1.13 & 63.47±1.18 \\
& PGD    &70.79±0.80 &71.98±1.41 &60.84±0.94   &71.21±0.65  & 72.29±1.37 & 61.12±0.36   \\
& TDGIA   &71.76±0.59 &71.28±0.76 &63.95±1.68   &72.26±0.72    &71.73±1.20 & 62.21±1.98\\
& MetaGIA  &71.03±0.79 &71.66±0.93 &63.89±1.13  &70.90±0.48   & 72.45±0.50 &  65.11±1.64     \\ \hline
\multirow{4}{*}{Pubmed}    
& clean   &87.87±0.53 &87.14±0.29 &85.12±0.42  &  86.49±0.28   &87.10±0.32  & 84.48±0.49        \\
& PGD     &64.79±0.26 & 73.27±0.27 &61.88±0.91  &61.99±0.91  &73.19±0.30 & 60.25±0.72        \\
& TDGIA   &60.21±0.54 &73.43±0.18 &71.56±0.35  & 59.40±0.27    &72.80±0.19  &  69.58±0.46          \\
& MetaGIA &60.42±0.77 &74.12±0.33 &50.83±0.28  & 65.78±0.19  & 79.76±1.05 &  53.03±3.62     \\ \bottomrule
\end{tabular}
\end{table*}
\begin{figure}[t]
\centering
\includegraphics[width=0.51\textwidth]{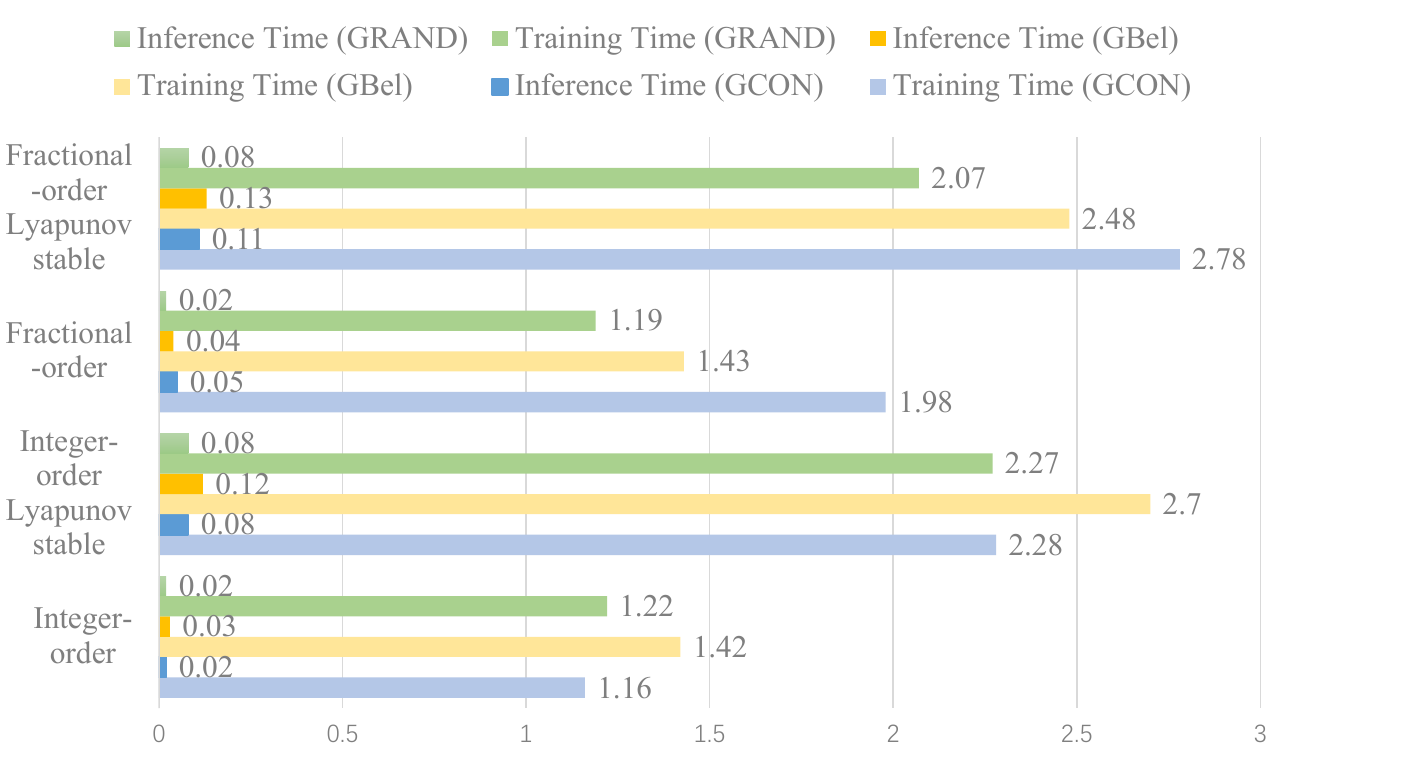}
\caption{Training time (s/epoch) and inference time (s) on the Cora dataset with $t\in[0,10]$ and step size of 1.}
\label{fig:architecuture}
\end{figure}

\begin{table}[t]
\small 
\caption{Node classification accuracy (\%) on graph injection, evasion, non-targeted attack in inductive learning \textbf{under different attack amplitudes} on the Cora dataset. \# N./E. means the number of nodes and edges injected.}
\begin{tabular}{c|c|c|c|c}
\hline
Attack                   & \# N./E. & FL-GRAND & FL-GBel & FL-GCON \\ \hline
\multirow{4}{*}{PGD}     & 80/40    & 73.76±0.52         &81.67±0.82       &   68.14±0.76     \\
                         & 120/80            &  65.39±1.06        &  77.18±0.99     &  61.44±1.79      \\
                         & 160/120           & 55.63±2.49         & 73.56±0.94        &56.49±1.37     \\
                         & 200/160           & 52.02±4.91        & 66.41±0.74       & 55.23±2.09 \\ \hline
\multirow{4}{*}{TDGIA}   &  80/40 &75.20±1.25    &79.66±0.85   &72.71±0.66       \\
                         &  120/80  &68.54±0.66         &73.58±0.95 & 59.34±3.99   \\
                         &  160/120     &60.24±2.61       &71.48±1.28 &52.34±2.81      \\
                         &   200/160    &55.49±1.87        & 64.30±2.44       & 50.24±4.24       \\ \hline
\multirow{4}{*}{MetaGIA} &  80/40     &  72.58±0.66  & 71.61±1.73   &  69.35±0.60       \\
                         &   120/80   & 63.77±0.97   & 67.28±0.82   &  61.03±1.49      \\
                         &  160/120     &59.01±2.07         & 65.61±1.89       &  59.35±1.39     \\
                         &  200/160     & 50.86±1.03        &  58.62±2.46      &  57.76±1.97     \\ \hline
\end{tabular}
\label{tab:number}
\end{table}

\subsection{The Experimental Results and Analysis}

This paper emphasizes the robustness of GNNs. We demonstrate that Lyapunov stable graph neural flows achieve significantly stronger robustness compared to vanilla integer- and fractional-order graph neural flows.

\subsubsection{Under the inductive learning settings} As shown in Table~\ref{tab:black}, both IL-GNNs and FL-GNNs achieve competitive performance on clean data and demonstrate significant robustness advantages across multiple datasets and GIA methods. For example, compared to F-GRAND on the Cora dataset, FL-GRAND not only improves the performance on clean data by average of 2.51\%, but also achieves an average increase in accuracy of 24.38\%, 23.18\%, 25.33\% under PGD, TDGIA and MetaGIA, respectively. These results show that our method can effectively mitigate the impact of adversarial perturbations. Furthermore, because the Lyapunov function is designed to be differentiable, its gradient can be computed directly using automatic differentiation. Consequently, as shown in Figure 2, the training and inference time of the proposed methods are slightly higher than that of the baselines, which is a justified trade-off given the improved resilience against adversarial attacks.

\subsubsection{Under the transductive learning settings} As shown in Table~\ref{tab:ptb}, IL-GNNs and FL-GNNs experience a smaller performance degradation under high perturbation (ptb) rates compared to baseline models. For example, on the Citeseer dataset, FL-GRAND exhibits marginally inferior accuracy to F-GRAND under small ptb rates, while surpassing its counterpart substantially with higher ptb rates; Moreover, the accuracy of GCON and F-GCON decreases by 13.18\% and 4.77\% as the ptb rate increases from 0\% to 25\%, while that of IL-GCON and FL-GCON only decreases by 1.65\% and 1.26\% in the same case, indicating that IL-GNNs and FL-GNNs are insensitive to ptb rates and further validating the effectiveness of the stability module. 

In general, the average accuracy of IL-GNNs and FL-GNNs over ten rounds of experiments outperforms the baseline models, and its standard deviation is smaller than that of the baseline models in the vast majority of robustness evaluation cases, demonstrating that the proposed models exhibits strong reliability. 

\subsubsection{Under the white-box attack settings} To further evaluate robustness, we now expand our analysis to include white-box, injection, and evasion attacks. Under white-box attack settings, adversaries possess complete knowledge of the target model, which constitutes a far more powerful threat compared to black-box scenarios. we conduct inductive learning tasks following the same setup as Table~\ref{tab:black}, with results detailed in Table~\ref{tab:white}. For instance, on the Cora dataset under the PGD attack, the accuracy of GRAND drops drastically from 87.53\% to 36.02\%, while GBel and GCON experience similar declines. In contrast, our framework maintains stronger resilience compared to its baseline models. Our findings reveal that under white-box attacks, all baseline models suffer severe performance degradation. In contrast, the FL-GNN framework maintains stronger resilience compared to baseline models.

\subsubsection{The robustness of FL-GNNs under different attack amplitudes} We evaluate the robustness of FL-GNNs against adversarial attacks with varying perturbation amplitudes, as summarized in Table~\ref{tab:number}. Our results demonstrate that FL-GNNs maintain strong robustness even under extreme attack budgets (up to 200 nodes and 160 edges perturbed), demonstrating consistently stable performance even as the attack budgets grow. This suggests inherent resilience to large-scale adversarial perturbations.

\begin{figure*}[t] 
\centering
    \begin{subfigure}[b]{0.32\linewidth}
        \centering
        \includegraphics[width=\linewidth]{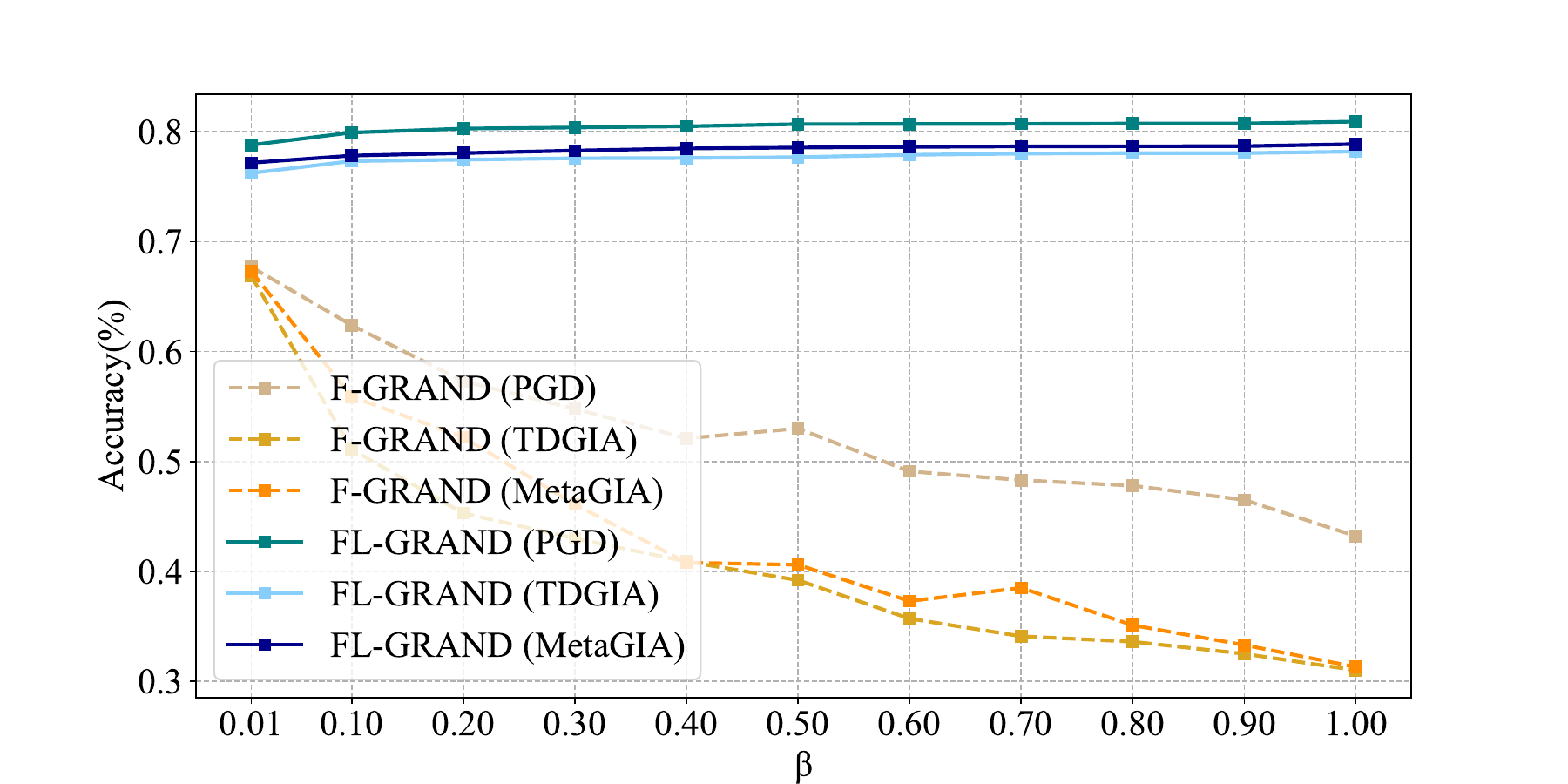}
        \caption{FL-GNN based on GRAND}
    \end{subfigure}
    \hfill
    \begin{subfigure}[b]{0.32\linewidth}
        \centering
        \includegraphics[width=\linewidth]{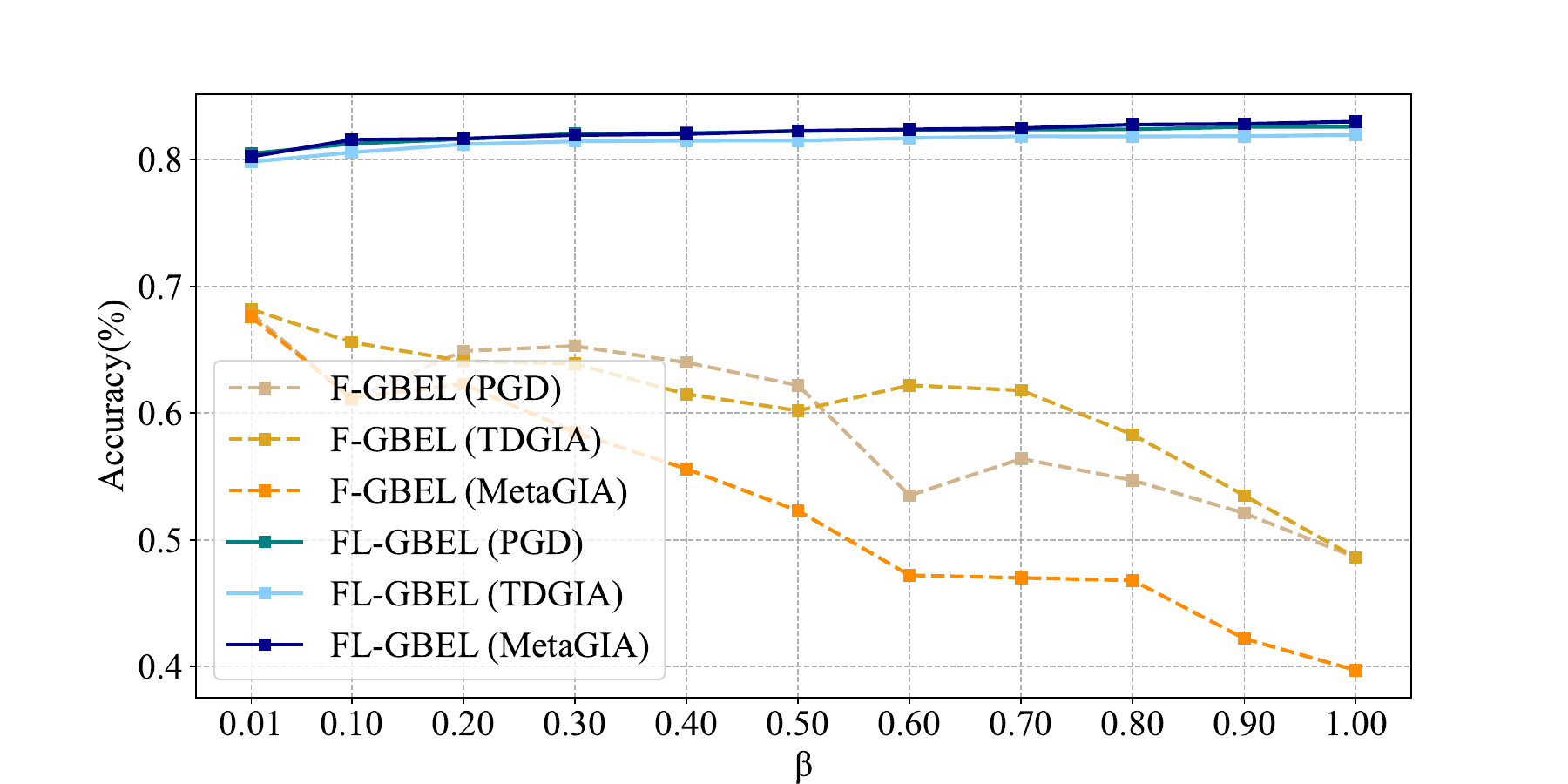}
        \caption{FL-GNN based on GBEL}
    \end{subfigure}
    \hfill
    \begin{subfigure}[b]{0.32\linewidth}
        \centering
        \includegraphics[width=\linewidth]{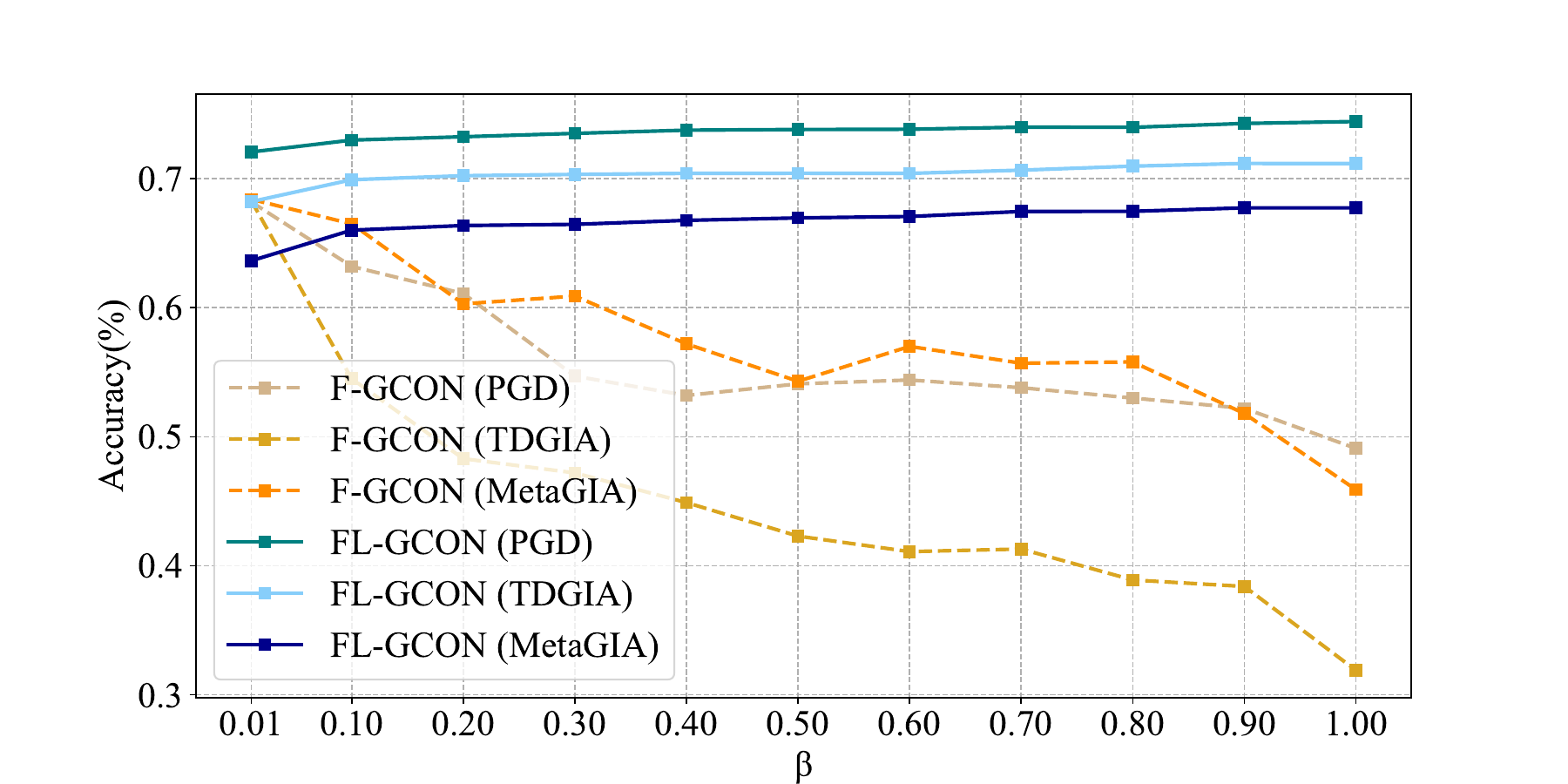}
        \caption{FL-GNN based on GCON}
    \end{subfigure}
\caption{Influence of $\beta$ on the robust test accuracy on Cora dataset.}
\label{fig:comparison}
\end{figure*}

\subsubsection{The robustness of FL-GNNs combining with graph adversarial training} We demonstrate that incorporating adversarial training (AT) into FL-GNNs can further enhance robustness. This is because adversarial training and the proposed method improve the resilience of a model from different perspectives. Specifically, the fractional Lyapunov stability module operates on the model itself, ensuring its inherent capability to resist adversarial perturbations. In contrast, AT techniques introduce generated adversarial examples into the training data to strengthen the model's resilience against adversarial attacks. The vanilla AT is essentially a min-max optimization problem:
\begin{equation} \label{eqn:02:adv_training}
\min_{\boldsymbol{\theta}} \mathbb{E}_{(\boldsymbol{x}, y) \sim \mathcal{D}}[ \max_{\boldsymbol{\rho}} \mathcal{L}(z(\boldsymbol{x} + \boldsymbol{\rho}; \boldsymbol{\theta}), y)],
\end{equation}
where $\mathcal{D}$ denotes the data distribution, $\boldsymbol{\rho}$ represents the adversarial perturbation, $\boldsymbol{x}$ a clean input sample, and $y$ its ground‑truth label. The inner maximization searches for a perturbation $\boldsymbol{\rho}$ that maximizes the classification loss $\mathcal{L}$, while the outer minimization then updates the model parameters $\boldsymbol{\theta}$ to reduce the loss on these adversarial examples, which improves the model’s robustness against such perturbations.

The experimental results demonstrate that the proposed FL-GNNs (FL-GRAND, FL-GBel, and FL-GCON) combined with AT exhibit improved robustness against white-box adversarial attacks in inductive learning. Specifically, FL-GRAND+AT achieves an accuracy boost of 4.25\% under PGD attack and 0.46\% under TDGIA attack, outperforming its non-AT counterpart on the Citeseer dataset. This highlights the effectiveness of adversarial training in further enhancing the robustness of the proposed method.

\subsubsection{Comparison with the related method}

Unlike the energy conservation-based robustification method HANG~\cite{zhao2023adversarial}, our approach employs a learnable projection mechanism to explicitly enforce energy dissipation. This guides the system toward the minimum energy state corresponding to stable equilibrium points. Empirically, the proposed models outperform HANG across most adversarial settings (with the exception of the Pubmed dataset in Table~\ref{tab:black}). These results demonstrate that our framework is better suited for handling various adversarial settings and generalizes well across standard benchmarks.

\subsection{Ablation Study}

\subsubsection{Influence of the equilibrium-separating layer} To verify the necessity of the equilibrium-separating layer, we conduct an ablation study by replacing it with a standard fully connected layer. As shown in Table 5, the absence of this component generally leads to reduced robustness compared to the results in Table 2, confirming its role in enhancing the resilience of FL-GNNs. Although we observe isolated cases where the equilibrium-separating layer could potentially degrade performance, its overall contribution to robustness outweighs these marginal drawbacks.

\subsubsection{Influence of $\beta$} We evaluate the performance of our method by adjusting $\beta$. As shown in Figure 2, our approach consistently demonstrates superior robustness and minimal fluctuations compared to FROND under three types of governing equation with different values of $\beta$. It is worth mentioning the Eqn.(\ref{ml_stability}) in the proof of Proposition 6 shows that larger $\beta$ generally leads to faster convergence of the Mittag–Leffler stability, implying quicker stabilization of the system. The incrementally increasing trend in adversarial robustness accuracy in Figure 2 indicates that larger $\beta$ yields smaller adversarial fluctuations, which further supports the alignment between theory and experiment.

\section{Conclusions}

In this paper, we proposed a model robustification framework against graph adversarial attacks by integrating Lyapunov stability theory with graph neural flow and thereby introduced a new family of Lyapunov stable GNNs. By constraining the GNN dynamics to satisfy the conditions of integer- or fractional-order Lyapunov's direct method, our approach ensures provable stability under malicious adversarial attacks. Experiments demonstrated that both integer- and fractional-order Lyapunov stable neural flow substantially outperforms the base neural flow across standard benchmarks and various attack scenarios. There are several promising directions for future research, such as the extension of the framework to other graph-related tasks, and the investigation of the interplay between fractional dynamics with more advanced theories (e.g., FTLE or finite‐time stability analysis) to broaden the theoretical underpinnings of robust GNNs.

\bibliographystyle{IEEEtran}
\bibliography{ref}{}
\newpage

\begin{IEEEbiography}[{\includegraphics[width=1in,height=1.25in,clip,keepaspectratio]{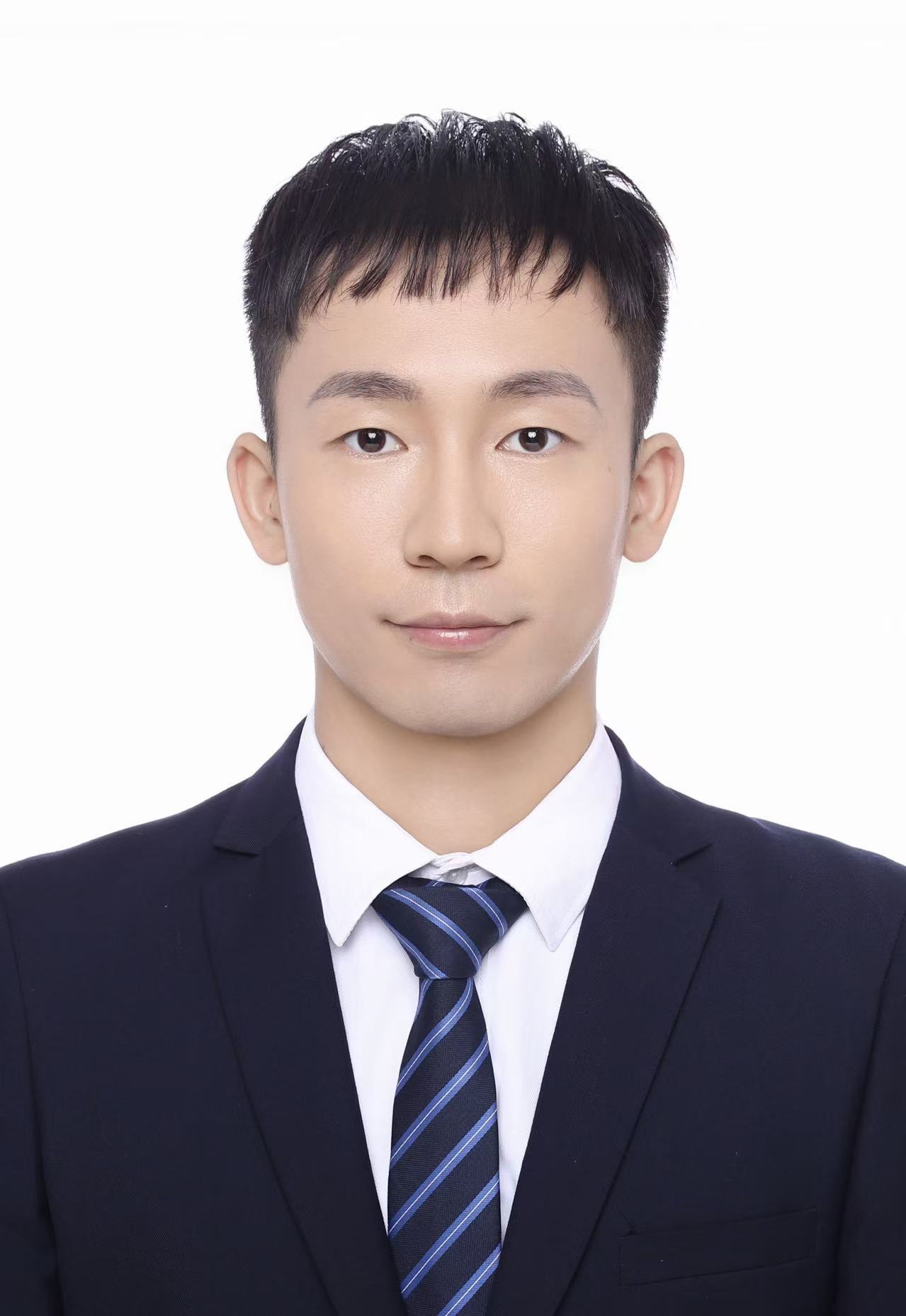}}]{Haoyu Chu} is currently a Lecturer at the School of Computer Science and Technology, China
University of Mining and Technology (CUMT). He received the Ph.D. degree from the Institute of Information Science, Beijing Jiaotong University (BJTU), in 2024. Prior to that, he was a visiting Ph.D. student at the Graduate School of Information Science and Technology, Osaka University. His research mainly focuses on scientific machine learning, physics-informed deep learning, and AI for scientific computing.
\end{IEEEbiography}

\begin{IEEEbiographynophoto}{Xiaotong Chen} received the B.E. degree from Henan University at Kaifeng and M.S. degree from Beijing Jiaotong University(BJTU), China, in 2020 and 2022. She is currently a Ph.D. candidate at the Institute of Information Science, Beijing Jiaotong University, Beijing, China. Her research interests include cross-modal retrieval and multi-modal learning.\end{IEEEbiographynophoto}
\vspace{11pt}

\begin{IEEEbiography}[{\includegraphics[width=1in,height=1.25in,clip,keepaspectratio]{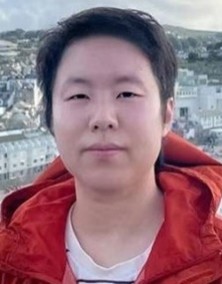}}]{Wei Zhou} (IEEE Senior Member) is an Assistant Professor at Cardiff University, United Kingdom. Previously, Wei studied and worked at other institutions such as the University of Waterloo (Canada), the National Institute of Informatics (Japan), the University of Science and Technology of China, Intel, Microsoft Research, and Alibaba Group. Dr Zhou is now an Associate Editor of IEEE Transactions on Neural Networks and Learning Systems (TNNLS), ACM Transactions on Multimedia Computing, Communications, and Applications (TOMM), and Pattern Recognition. Wei’s research interests span multimedia computing, perceptual image processing, and computational vision.
\end{IEEEbiography}

\begin{IEEEbiography}[{\includegraphics[width=1in,height=1.25in,clip,keepaspectratio]{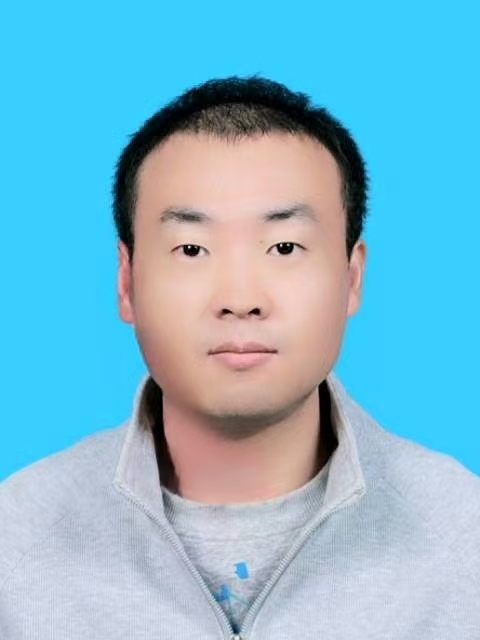}}]{Wenjun Cui} (IEEE Senior Member) is currently a Lecturer with School of Automation and Software Engineering, Shanxi University, China. He received the Ph.D. degree from Beijing Jiaotong University. His research mainly focuses on machine learning and differential equations.
\end{IEEEbiography}

 \begin{IEEEbiography}[{\includegraphics[width=1in,height=1.25in,clip,keepaspectratio]{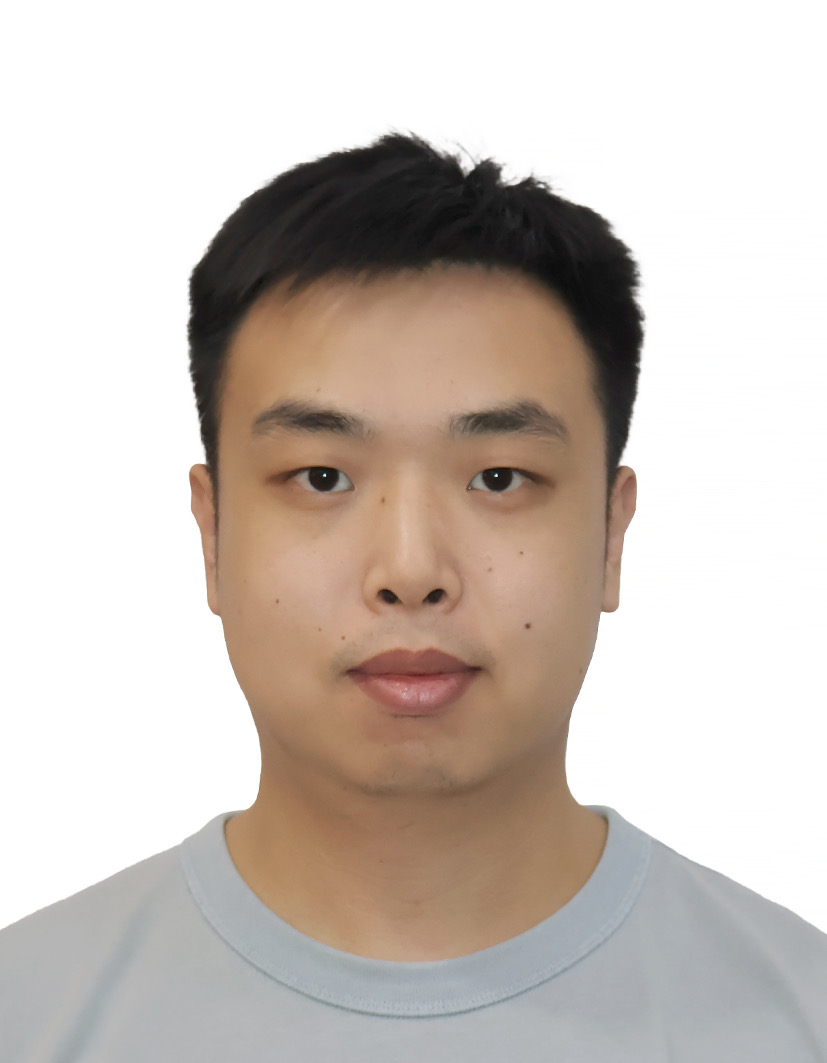}}]{Kai Zhao}
received the Ph.D. degree from the School of Electrical and Electronic Engineering, Nanyang Technological University, Singapore, in 2025. He received the B.Eng. degree in electrical engineering and automation from the Huazhong University of Science and Technology, China, in 2017, and the M.Sc. degree in electrical engineering from the National University of Singapore in 2019. He is currently with TikTok, Singapore. His research interests include graph learning and adversarial robustness in machine learning.
\end{IEEEbiography}

\begin{IEEEbiography}[{\includegraphics[width=1in,height=1.25in,clip,keepaspectratio]{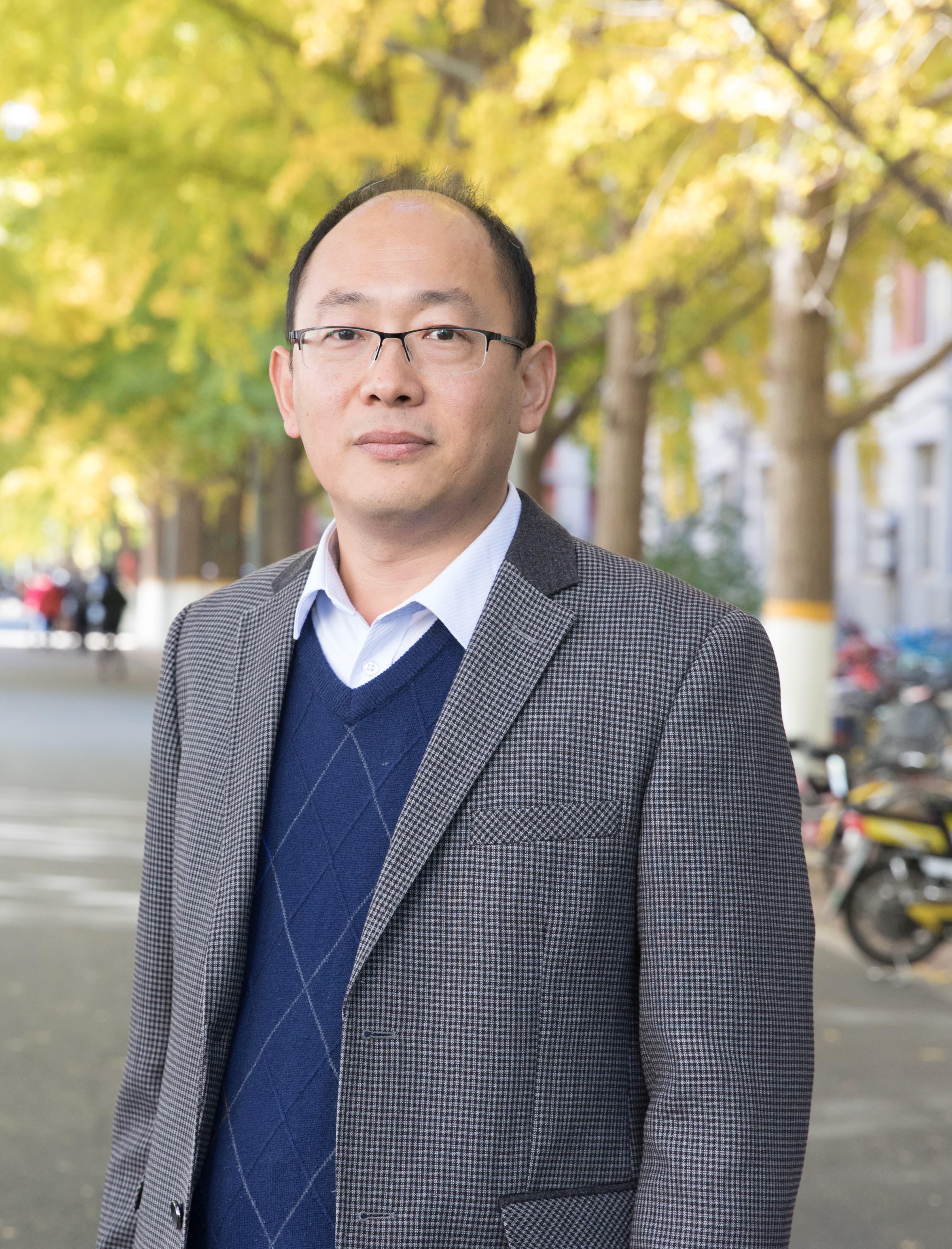}}]{Shikui Wei} is a Professor at Beijing Jiaotong University, China. He received his Ph.D. in Engineering from Beijing Jiaotong University in 2010. From 2008 to 2010, he was a joint Ph.D. student at Nanyang Technological University, Singapore. He then worked as a Postdoctoral Fellow at Nanyang Technological University from 2010 to 2011. In 2012, he visited Microsoft Research Asia for six months. In 2015, he joined the China Mobile Research Institute as a Deputy Principal Researcher. His research interests include cross-modal intelligent fusion and machine learning.
\end{IEEEbiography}

\begin{IEEEbiography}[{\includegraphics[width=1in,height=1.25in,clip,keepaspectratio]{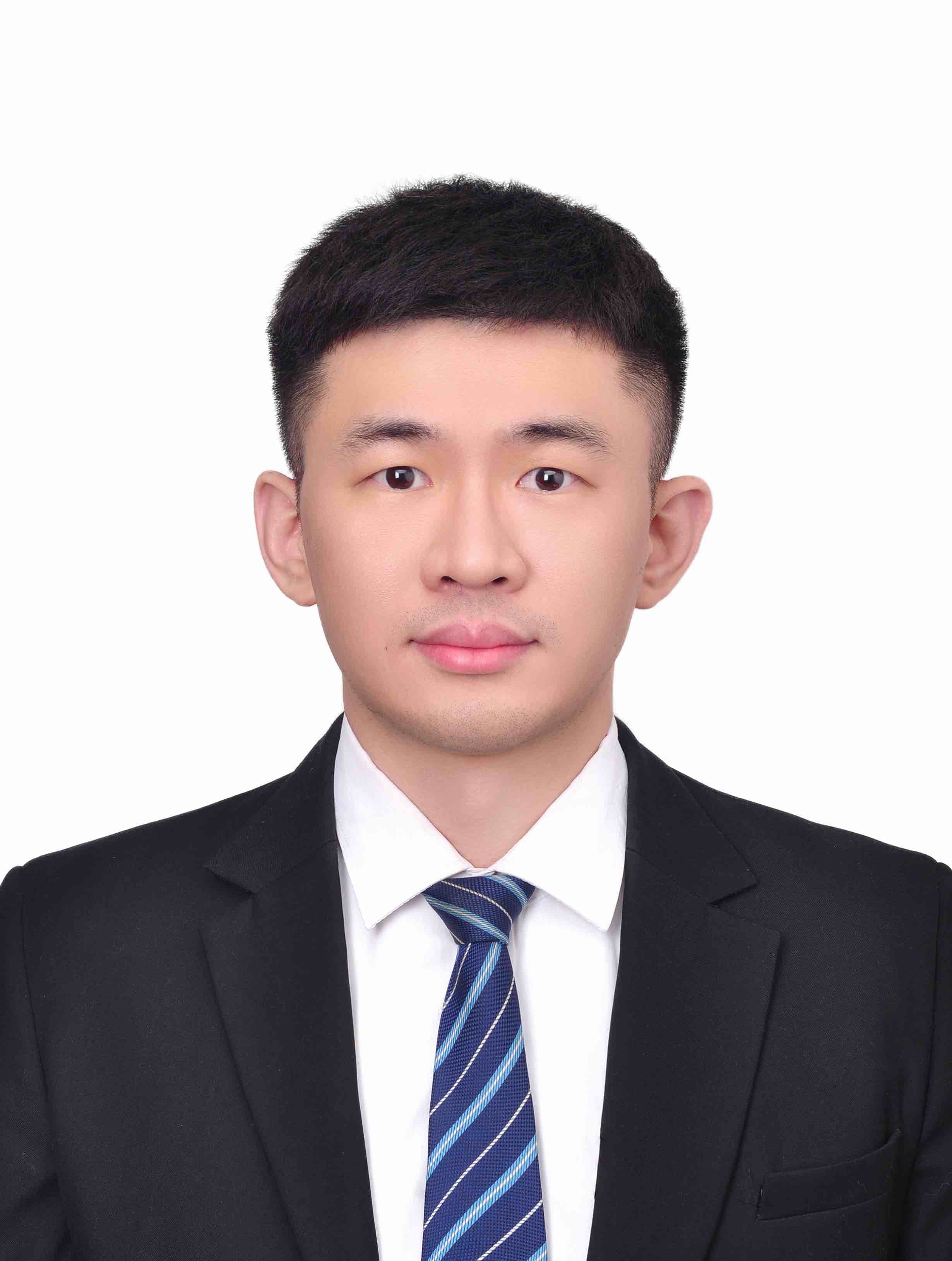}}]{Qiyu Kang}
received the B.S. degree in electronic information science and technology from the University of Science and Technology of China (USTC) in 2015, and the Ph.D. degree from Nanyang Technological University (NTU), Singapore, in 2019. From 2020 to 2024, he was a Research Fellow with NTU. He is currently a Professor with the School of Information Science and Technology, USTC. His research interests include machine learning and computational intelligence.  
\end{IEEEbiography}

\vfill

\end{document}